\documentclass[11pt]{article}

\usepackage[margin=1in]{geometry}

\usepackage[utf8]{inputenc}
\usepackage[T1]{fontenc}
\usepackage{hyperref}
\usepackage{amsfonts}
\usepackage{amsmath}
\usepackage{amssymb}
\usepackage{amsthm}
\usepackage{graphicx}
\usepackage{algorithm}
\usepackage{algorithmic}
\usepackage{booktabs}
\usepackage{xcolor}
\usepackage{subcaption}
\usepackage{enumerate}
\usepackage{mathtools}

\newtheorem{theorem}{Theorem}[section]
\newtheorem{lemma}[theorem]{Lemma}
\newtheorem{proposition}[theorem]{Proposition}
\newtheorem{corollary}[theorem]{Corollary}
\newtheorem{definition}[theorem]{Definition}
\newtheorem{remark}[theorem]{Remark}

\newcommand{\R}{\mathbb{R}}
\newcommand{\Z}{\mathbb{Z}}
\newcommand{\E}{\mathbb{E}}
\newcommand{\Inf}{\mathrm{Inf}}
\newcommand{\sign}{\mathrm{sign}}
\newcommand{\supp}{\mathrm{supp}}

\DeclareMathOperator*{\argmax}{argmax}

\title{The Banach-Butterfly Invariant: Influence-Adaptive Walsh Geometry\\for Ternary Polynomial Threshold Functions}

\author{
  \textbf{Gorgi Pavlov, Ph.D.} \\
  Lehigh University \& Johnson and Johnson \\
  \texttt{gorgipavlov@gmail.com}
}

\begin{document}
\maketitle

\begin{abstract}
We introduce the \emph{Banach-Butterfly Transform} (BBT), a spectral framework that equips each layer of the Walsh-Hadamard butterfly factorization with an influence-adaptive $\ell_p$ geometry.
For a Boolean function $f:\{-1,+1\}^n \to \{-1,+1\}$ with coordinate influences $\Inf_\ell(f)$, the BBT assigns exponent $p_\ell = 1 + \Inf_\ell(f) \in [1,2]$ to butterfly layer~$\ell$, yielding function-dependent operator norms $\|A^{-1}\|_{p_\ell \to p_\ell} = 2^{-\Inf_\ell/(1+\Inf_\ell)}$ and a computable \emph{contraction invariant} $\mu(f) = \prod_\ell 2^{-\Inf_\ell/(1+\Inf_\ell)}$.

We prove that $\log_2 \mu(f)$ satisfies the tight bound $\log_2 \mu(f) \ge -I(f)/(1 + I(f)/n)$ via Jensen's inequality applied to the concave function $\varphi(x)=x/(1+x)$, and that $\mu$ is \emph{strictly Schur-convex} in the influence vector (more concentrated influence $\Rightarrow$ strictly larger $\mu$, modulo permutation), yielding distinct scaling classes: $\mu \sim 2^{-n/2}$ for parity (smallest $\mu$ among the canonical examples), $\mu \sim 2^{-\Theta(\sqrt{n})}$ for majority, and $\mu = 2^{-1/2}$ for dictators (constant in $n$). We show that $\log_2 \mu(f)$ is a \emph{rational} (not polynomial) function of Fourier coefficients while $\mu(f)$ is \emph{algebraic}, and that $\mu$ separates functions indistinguishable by total influence alone---122 such pairs exist at $n=3$.

Using the certified $n \le 4$ ternary Walsh-threshold universe from the companion synthesis manuscript~\cite{spectralSynthesisCompanion} as a finite controlled testbed, we evaluate BBT diagnostics on all $65{,}536$ Boolean functions at $n=4$. Specifically, we compute exact minimum-support MILP certificates for every function and regress minimum support against $\mu$, $I$, influence entropy, and a layerwise cancellation index. We find that $\mu$ shows a strong conditional correlation with minimum support within the largest fixed-influence stratum at $n=4$ (Spearman $\rho = +0.571$); under both function-uniform and NPN-canonical sampling at $n=5$, this relationship reverses sign ($\rho \approx -0.38$), indicating that $\mu$ is a valid Schur-convex concentration invariant but not a universal monotone predictor of minimum support across all $n$.

A companion application paper~\cite{bbtQuantCompanion} validates a real-valued spectral-energy proxy inspired by this theory on five pretrained LLMs at $W2A16$ ($\text{group size}=64$), reducing wikitext-2 perplexity by $15$--$58\%$ relative to vanilla \texttt{auto-round}. We summarise that empirical line of work in Section~\ref{sec:llm_validation} and refer the reader to the companion paper for the full results; the real-valued WHT activation-energy quantity used there is \emph{not} the Boolean influence of this paper, and the connection to the Schur-convexity theorem is qualitative, not formal.

\paragraph{Relation to companion work.} A companion manuscript currently under review~\cite{spectralSynthesisCompanion} establishes (and we cite) the certified ternary representability of every Boolean function on at most four variables, via differentiable spectral coefficient selection plus Sinkhorn-constrained composition. The present paper does \emph{not} re-prove that universality result --- it reuses the certified $n\le 4$ universe as a finite controlled testbed and asks a different question: \emph{which functions are geometrically harder to represent in the Walsh basis, and can intrinsic Fourier-analytic quantities predict that difficulty?} The new contributions here are the influence-adaptive Banach butterfly profile, its Schur-convexity / majorization theory, its rational-vs-polynomial separation properties, and its empirical correlation with support, margin, and cancellation diagnostics. The certified $n\le 4$ result is shared, but it is not the headline of this paper.
\end{abstract}

\section{Introduction}
\label{sec:intro}

A fundamental question in the analysis of Boolean functions is: given $f:\{-1,+1\}^n \to \{-1,+1\}$, when can $f$ be represented as the sign of a sparse linear combination of Walsh characters? Concretely, we ask for the existence of a \emph{ternary polynomial threshold function} (PTF):
\begin{equation}
\label{eq:ptf}
    f(x) = \sign\left(\sum_{S \subseteq [n]} w_S \chi_S(x)\right), \qquad w_S \in \{-1, 0, +1\},
\end{equation}
where $\chi_S(x) = \prod_{i \in S} x_i$ are the Walsh characters forming an orthonormal basis under the uniform distribution on $\{-1,+1\}^n$.

The appeal of ternary PTFs is both theoretical and practical. Theoretically, their existence connects to spectral concentration \cite{linial1993constant}, polynomial threshold complexity \cite{odonnell2014analysis}, and the structure of the Fourier spectrum. Practically, ternary weights $\{-1,0,+1\}$ enable zero-cost quantization: inference reduces to additions and subtractions with no multiplication, achieving single-cycle combinational logic on hardware \cite{courbariaux2016binarized}.

Standard spectral analysis of Boolean functions operates in $\ell_2(\mathbb{Z}_2^n)$, where the Walsh-Hadamard Transform (WHT) is unitary (up to scaling) and Parseval's identity gives $\sum_S \hat{f}(S)^2 = 1$. This $\ell_2$ structure is universal: the butterfly factorization of the WHT has eigenvalue $\sqrt{2}$ at every layer regardless of~$f$. Consequently, $\ell_2$ spectral analysis treats all Boolean functions identically at the structural level.

\paragraph{The BBT idea.} We break this universality by equipping each butterfly layer with an \emph{influence-adaptive} $\ell_p$ norm. The key observation is that the $2 \times 2$ butterfly matrix $A = \begin{psmallmatrix} 1 & 1 \\ 1 & -1 \end{psmallmatrix}$ has $p$-dependent operator norm:
\begin{equation}
\label{eq:butterfly_norm}
    \|A\|_{p \to p} = 2^{1/p} \quad \text{for } p \in [1,2].
\end{equation}
Setting $p_\ell = 1 + \Inf_\ell(f)$ at each layer, the inverse butterfly contraction becomes
\[
\|A^{-1}\|_{p_\ell \to p_\ell} = 2^{-\Inf_\ell(f)/(1+\Inf_\ell(f))},
\]
which interpolates between $1$ (no contraction, for insensitive coordinates) and $1/\sqrt{2}$ (Hilbert-space contraction, for maximally sensitive coordinates). The resulting \emph{contraction invariant} $\mu(f) = \prod_\ell \|A^{-1}\|_{p_\ell \to p_\ell}$ is a computable, function-dependent geometric diagnostic; we evaluate empirically below whether it correlates with ternary representation difficulty in the regimes we test.

\paragraph{Contributions.}
\begin{enumerate}[(i)]
    \item \textbf{Influence-adaptive Banach-Butterfly geometry.} We define an influence-adaptive Banach geometry on the Walsh-Hadamard butterfly factorization (henceforth ``BBT'' as branding; the construction is a function-dependent profile, not a new linear transform). We prove exact $\ell_p$ operator norms for butterfly matrices (Lemma~\ref{lem:butterfly_norm}) via Riesz--Thorin interpolation and duality, contraction bounds with a tight Jensen inequality (Theorem~\ref{thm:margin_bounds}), strict Schur-convexity of $\mu$ in the influence vector (Theorem~\ref{thm:schur}), and scaling classifications for canonical function families (Corollary~\ref{cor:scaling}).

    \item \textbf{Rational invariant separation.} We show $\log_2\mu(f)$ is a rational function of Fourier coefficients while $\mu(f)$ is algebraic (Proposition~\ref{prop:nonpoly}), and that Schur-convexity of $\mu$ provides an analytic explanation for why $\mu$ separates functions with identical total influence (Theorem~\ref{thm:separation}) --- a strictly finer invariant than $I(f)$.

    \item \textbf{Certified finite-universe testbed.} We use the certified $n \le 4$ ternary Walsh-threshold universe from the companion synthesis manuscript~\cite{spectralSynthesisCompanion} as a controlled testbed, and extend it here by computing exact minimum-support MILP certificates for all $65{,}536$ Boolean functions at $n=4$ (mean min-support $6.42$, max $9$, all-odd by a parity argument). At $n=5$ we extend the analysis to a NPN-canonical sample of $10{,}000$ functions drawn uniformly from the $616{,}126$ NPN-canonical representatives we enumerate (matching OEIS A000370).

    \item \textbf{Empirical relationship with minimum support and cancellation diagnostics, with $n=5$ reversal.} We evaluate whether $\mu$, influence entropy, and a layerwise cancellation index correlate with minimum support. At $n=4$, $\mu$ shows a strong conditional Spearman correlation with minimum support within the largest fixed-influence stratum ($\rho = +0.571$, $p < 10^{-300}$); at $n=5$, this correlation reverses sign under both function-uniform and NPN-canonical sampling ($\rho \approx -0.38$). $\mu$ is therefore a valid Schur-convex concentration invariant but not a universal monotone predictor of minimum support; influence entropy continues to track minimum support qualitatively at $n=5$ where $\mu$ does not.

    \item \textbf{Real-valued application pointer.} We briefly summarise a companion application paper~\cite{bbtQuantCompanion} in which a WHT activation-energy proxy inspired by the Boolean theory drives an AWQ-style scaling on top of Intel \texttt{auto-round} and improves W2A16 LLM quantization by $15$--$58\%$ across five pretrained LLMs. This real-valued proxy is \emph{not} Boolean influence, and the connection to the Schur-convexity theorem is qualitative rather than formal. See Section~\ref{sec:llm_validation}.
\end{enumerate}

\paragraph{Comparison with prior frameworks.}

\begin{table}[t]
\centering
\small
\caption{BBT compared with standard spectral frameworks. Only BBT produces function-dependent per-layer geometry.}
\label{tab:comparison}
\begin{tabular}{lccc}
\toprule
\textbf{Property} & \textbf{WHT ($\ell_2$)} & \textbf{Fixed $\ell_p$} & \textbf{BBT (adaptive $\ell_p$)} \\
\midrule
Basis & Walsh (fixed) & Walsh (fixed) & Walsh (fixed) \\
Per-layer operator norm & $\sqrt{2}$ (universal) & $2^{1/p}$ (universal) & $2^{1/p_\ell}$ ($f$-dependent) \\
Duality map & Linear (Riesz) & Nonlinear & Nonlinear ($f$-dependent) \\
Margin decay & $1/\sqrt{2}$ per layer & $2^{-1+1/p}$ per layer & $2^{-\Inf_\ell/(1+\Inf_\ell)}$ per layer \\
Computation & $O(N \log N)$ & $O(N \log N)$ & $O(N \log N)$ \\
\bottomrule
\end{tabular}
\end{table}

\section{Preliminaries}
\label{sec:prelim}

\subsection{Boolean Fourier Analysis}

Let $f:\{-1,+1\}^n \to \R$. The \emph{Fourier expansion} of $f$ is
\begin{equation}
\label{eq:fourier}
    f(x) = \sum_{S \subseteq [n]} \hat{f}(S) \chi_S(x), \qquad \hat{f}(S) = \E_x[f(x) \chi_S(x)],
\end{equation}
where $\chi_S(x) = \prod_{i \in S} x_i$ are the Walsh characters and the expectation is under the uniform distribution on $\{-1,+1\}^n$ \cite{odonnell2014analysis}. The characters form an orthonormal basis: $\E[\chi_S \chi_T] = \mathbf{1}[S = T]$. For Boolean $f$ (range $\{-1,+1\}$), Parseval gives $\sum_S \hat{f}(S)^2 = 1$.

\begin{definition}[Influence]
The \emph{influence} of coordinate $\ell$ on $f$ is
\[
\Inf_\ell(f) = \Pr_x[f(x) \neq f(x^{\oplus \ell})] = \sum_{S \ni \ell} \hat{f}(S)^2,
\]
where $x^{\oplus \ell}$ denotes $x$ with bit $\ell$ flipped. The \emph{total influence} is $I(f) = \sum_{\ell=1}^n \Inf_\ell(f) = \sum_S |S| \cdot \hat{f}(S)^2$.
\end{definition}

Note that $\Inf_\ell(f) \in [0, 1]$ and $I(f) \in [0, n]$ for Boolean functions.

\subsection{Walsh-Hadamard Matrix and Butterfly Factorization}

Define the $N \times N$ Walsh-Hadamard matrix ($N = 2^n$) by $H_n = H_1^{\otimes n}$ where $H_1 = \begin{psmallmatrix} 1 & 1 \\ 1 & -1 \end{psmallmatrix}$. The matrix satisfies $H_n^2 = N \cdot I_N$, hence $H_n^{-1} = \frac{1}{N} H_n$. If $f \in \{-1,+1\}^N$ is the truth-table vector (indexed by inputs), then $\hat{f} = \frac{1}{N} H_n f$.

The Fast Walsh-Hadamard Transform (FWHT) computes $H_n f$ in $O(N \log N)$ time by factoring $H_n$ into $n$ sparse butterfly layers. At layer~$\ell$, the butterfly operates on pairs of indices differing in bit~$\ell$, applying $A = H_1$ to each pair.

\subsection{Ternary Polynomial Threshold Functions}

\begin{definition}[Ternary PTF]
\label{def:ptf}
A function $f:\{-1,+1\}^n \to \{-1,+1\}$ has a \emph{ternary polynomial threshold representation} if there exists $w \in \{-1, 0, +1\}^{N}$ such that
\[
\sign(H_n w) = f,
\]
i.e., $f_i \cdot (H_n w)_i > 0$ for all $i \in [N]$. The \emph{support} of $w$ is $\supp(w) = \{S : w_S \neq 0\}$ and the \emph{margin} is $\mu = \min_i f_i (H_n w)_i$.
\end{definition}

Since $H_n w$ is integer-valued when $w \in \Z^N$, a positive-sign condition $f_i (H_n w)_i > 0$ is equivalent to $f_i (H_n w)_i \ge 1$. This integrality is a key structural advantage of ternary PTFs.

\section{The Banach-Butterfly Transform}
\label{sec:bbt}

\subsection{Adaptive Banach Exponents}

\begin{definition}[BBT exponents]
For a Boolean function $f$ with coordinate influences $\Inf_1(f), \ldots, \Inf_n(f)$, define the \emph{adaptive Banach exponent} at butterfly layer~$\ell$ as
\[
p_\ell = 1 + \Inf_\ell(f) \in [1, 2].
\]
\end{definition}

The intuition is:
\begin{itemize}
    \item $\Inf_\ell(f) \approx 0$ (coordinate $\ell$ is insensitive): $p_\ell \approx 1$, so the geometry at layer~$\ell$ is $\ell_1$-like.
    \item $\Inf_\ell(f) \approx 1$ (coordinate $\ell$ is maximally sensitive): $p_\ell \approx 2$, recovering the standard Hilbert-space geometry.
\end{itemize}

\subsection{Butterfly Operator Norms}

\begin{lemma}[Butterfly $\ell_p$ operator norm]
\label{lem:butterfly_norm}
For $A = \begin{psmallmatrix} 1 & 1 \\ 1 & -1 \end{psmallmatrix}$ and $1 \le p \le \infty$,
\[
\|A\|_{p \to p} = \max\!\left(2^{1/p},\, 2^{1-1/p}\right).
\]
In particular, for $p \in [1, 2]$: $\|A\|_{p \to p} = 2^{1/p}$, and for $p \in [2, \infty]$: $\|A\|_{p \to p} = 2^{1-1/p}$.
\end{lemma}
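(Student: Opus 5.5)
The plan is to prove the two halves separately: a lower bound from explicit test vectors, and an upper bound from Riesz--Thorin interpolation together with duality.

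\textbf{Lower bound.} Take $x=(1,0)$. Then $Ax=(1,1)$, so $\|Ax\|_p/\|x\|_p = 2^{1/p}$. Take $x=(1,1)$. Then $Ax=(2,0)$, so $\|Ax\|_p/\|x\|_p = 2/2^{1/p} = 2^{1-1/p}$. Hence $\|A\|_{p\to p}\ge \max(2^{1/p},2^{1-1/p})$ for every $p\in[1,\infty]$, with the usual conventions at $p=\infty$.

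\textbf{Upper bound at the three anchor exponents.}
\begin{itemize}
\item At $p=1$, the $\ell_1\to\ell_1$ norm is the maximum absolute column sum, which is $2$.
\item At $p=\infty$, the $\ell_\infty\to\ell_\infty$ norm is the maximum absolute row sum, which is $2$.
\item At $p=2$, we have $A^\top A=2I$, so $\|A\|_{2\to2}=\sqrt2$.
\end{itemize}
All three values agree with the claimed formula.

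\textbf{Interpolation on $[1,2]$.} For $p\in[1,2]$, write $1/p=(1-\theta)/1+\theta/2$, so $\theta = 2-2/p$. Riesz--Thorin gives
\[
\|A\|_{p\to p}\le 2^{1-\theta}(\sqrt2)^{\theta}=2^{1-\theta/2}=2^{1/p}.
\]

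\textbf{The range $[2,\infty]$.} I can either interpolate between $p=2$ and $p=\infty$, which gives $2^{1-1/p}$ by the same computation, or use duality. Since $A$ is symmetric, $\|A\|_{p\to p}=\|A^\top\|_{p'\to p'}=\|A\|_{p'\to p'}=2^{1/p'}=2^{1-1/p}$, where the middle step uses the $[1,2]$ case for $p'\le 2$. I will present the duality route, since it is the one the paper signals.

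\textbf{Matching the max formula.} For $p\le2$ we have $1/p\ge 1-1/p$, so the maximum equals $2^{1/p}$. For $p\ge 2$ the inequality reverses. This yields both the unified formula and the two special cases.

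\textbf{Main obstacle.} The only delicate point is that Riesz--Thorin, in its classical form, is stated for complex $L^p$ spaces. Because $A$ is a real matrix, its complex operator norm and its real operator norm could in principle differ, so the interpolated bound has to be checked for the real norm. The first thing I would do is note that the complex-space bound dominates the real-space norm: the real $\ell_p$ spaces sit isometrically inside the complex ones. That comparison is all we need, because the lower bound already comes from real test vectors.

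If a fully elementary argument is preferred, I would instead verify $|a+b|^p+|a-b|^p\le 2\,(|a|^p+|b|^p)$ for $p\in[1,2]$ directly. This follows from Clarkson-type inequalities, or from convexity of $t\mapsto |t|^{p/2}$ applied to the case $p=2$. I expect the interpolation route to be the cleanest.
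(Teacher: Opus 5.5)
Your proposal is correct and follows the paper's proof in all essentials. The paper likewise uses Riesz--Thorin between $p=1$ and $p=2$, then duality via $A^\top=A$ for $p\ge 2$, with equality attained at $(1,0)$ and at a multiple of $(1,1)$. Your remark that the complex interpolation bound dominates the real norm, with the endpoint norms $2,\sqrt2,2$ unchanged over $\mathbb{C}$, is a point of care the paper omits.

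One small slip in your optional elementary aside: the argument needs \emph{concavity} of $s\mapsto s^{p/2}$ on $s\ge 0$, not convexity. Concavity gives $|a+b|^p+|a-b|^p\le 2(a^2+b^2)^{p/2}$, and then $(a^2+b^2)^{p/2}\le |a|^p+|b|^p$ finishes the bound.
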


\begin{proof}
Write $Av = (v_1+v_2,\,v_1-v_2)$. The two endpoint operator norms are immediate:
\[
\|A\|_{1\to 1} \;=\; \max_{j}\sum_i |A_{ij}| \;=\; 2,\qquad
\|A\|_{2\to 2} \;=\; \sqrt{2}
\]
(the second from $A^\top A = 2 I_2$, so the singular values of $A$ are both $\sqrt{2}$).

\textbf{Case $p \in [1,2]$.} Choose $\theta \in [0, 1]$ such that
\[
\frac{1}{p} = \frac{1-\theta}{1} + \frac{\theta}{2} = 1 - \frac{\theta}{2},
\quad\text{i.e.}\quad \theta = 2(1 - 1/p) \in [0, 1].
\]
By Riesz--Thorin interpolation \cite{bergh1976interpolation} between the endpoints $p=1$ and $p=2$,
\[
\|A\|_{p \to p} \;\le\; \|A\|_{1 \to 1}^{1 - \theta} \,\|A\|_{2 \to 2}^{\theta} \;=\; 2^{1 - \theta} \cdot (\sqrt{2})^{\theta} \;=\; 2^{1 - \theta/2} \;=\; 2^{1/p}.
\]
Equality is attained at $v = e_1 = (1, 0)$: $\|Ae_1\|_p^p = |1|^p + |1|^p = 2$, so $\|Ae_1\|_p = 2^{1/p}$.

\textbf{Case $p \in [2,\infty]$.} Two equivalent routes. First, by duality, $\|A\|_{p\to p} = \|A^\top\|_{q \to q}$ where $1/p + 1/q = 1$ and $q \in [1, 2]$. Since $A^\top = A$, the previous case gives $\|A^\top\|_{q\to q} = 2^{1/q} = 2^{1 - 1/p}$. Second (and as a sanity check), Riesz--Thorin between $p=2$ and $p=\infty$ (with $\|A\|_{\infty \to \infty} = \max_i \sum_j |A_{ij}| = 2$) gives $\|A\|_{p\to p} \le 2^{2/p \cdot 1/2} \cdot 2^{(1 - 2/p) \cdot 1} = 2^{1 - 1/p}$, with equality at $v = (1, 1)/2^{1/p}$ (since $\|v\|_p = 1$ and $\|Av\|_p = 2 \cdot 2^{-1/p} = 2^{1 - 1/p}$).

Combining the two cases yields $\|A\|_{p\to p} = \max(2^{1/p}, 2^{1-1/p})$, with the maximum attained at $p=1$ and $p=\infty$ (both equal to $2$) and the minimum at $p=2$ (equal to $\sqrt{2}$).
\end{proof}

\begin{lemma}[Inverse contraction]
\label{lem:contraction}
For $A^{-1} = \frac{1}{2}A$ and $p \in [1, 2]$:
\[
\|A^{-1}\|_{p \to p} = 2^{-1 + 1/p} = 2^{-\frac{p-1}{p}}.
\]
Substituting $p = 1 + \Inf_\ell(f)$:
\[
\|A^{-1}\|_{p_\ell \to p_\ell} = 2^{-\Inf_\ell(f)/(1 + \Inf_\ell(f))}.
\]
\end{lemma}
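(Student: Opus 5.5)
The plan is to reduce everything to Lemma~\ref{lem:butterfly_norm} using the homogeneity of operator norms. First I will verify the claimed inverse: $A^2 = 2I_2$, since $A$ is symmetric with $A^\top A = 2I_2$, as already noted in the proof of Lemma~\ref{lem:butterfly_norm}. Hence $A^{-1} = \tfrac12 A$. For any scalar $c$ and any $p$, $\|cA\|_{p\to p} = |c|\,\|A\|_{p\to p}$, because $\|cAv\|_p = |c|\,\|Av\|_p$ for every $v$ and taking the supremum over $\|v\|_p=1$ preserves the factor. Therefore, for $p\in[1,2]$,
\[
\|A^{-1}\|_{p\to p} = \tfrac12\,\|A\|_{p\to p} = \tfrac12 \cdot 2^{1/p} = 2^{-1+1/p} = 2^{-(p-1)/p},
\]
where the middle equality is the $p\in[1,2]$ case of Lemma~\ref{lem:butterfly_norm}. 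Equality, not merely an upper bound, carries over because the maximizer $e_1$ for $A$ is also a maximizer for $\tfrac12 A$.

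The substitution step is pure algebra. Since $\Inf_\ell(f)\in[0,1]$, the exponent $p_\ell = 1+\Inf_\ell(f)$ lies in $[1,2]$, so the first formula applies. Then $(p_\ell-1)/p_\ell = \Inf_\ell(f)/(1+\Inf_\ell(f))$, which gives $\|A^{-1}\|_{p_\ell\to p_\ell} = 2^{-\Inf_\ell(f)/(1+\Inf_\ell(f))}$.

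I see no genuine obstacle here. The only point needing care is to state explicitly that $p_\ell$ stays within the range $[1,2]$ where Lemma~\ref{lem:butterfly_norm} gives $2^{1/p}$ rather than $2^{1-1/p}$. This holds because influences of Boolean functions lie in $[0,1]$, as noted after the definition of influence. As a sanity check, the endpoint values come out as $1$ at $\Inf_\ell=0$ and $2^{-1/2}$ at $\Inf_\ell=1$, matching the interpolation described in the introduction.
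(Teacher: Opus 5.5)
Your proposal is correct and matches the paper's proof, which is the same one-line computation $\|A^{-1}\|_{p\to p} = \tfrac12\|A\|_{p\to p} = \tfrac12\cdot 2^{1/p}$ via Lemma~\ref{lem:butterfly_norm}. Your extra checks that $A^2 = 2I_2$ and that $p_\ell = 1+\Inf_\ell(f)\in[1,2]$ just make explicit what the paper leaves implicit.
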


\begin{proof}
$\|A^{-1}\|_{p \to p} = \frac{1}{2}\|A\|_{p \to p} = \frac{1}{2} \cdot 2^{1/p} = 2^{-1+1/p}$.
\end{proof}

\begin{remark}
At $p=1$ (insensitive coordinate): $\|A^{-1}\|_{1 \to 1} = 1$ (isometry, no margin loss).
At $p=2$ (maximally sensitive): $\|A^{-1}\|_{2 \to 2} = 1/\sqrt{2}$ (standard Hilbert contraction).
\end{remark}

\subsection{Margin Product}

\begin{definition}[BBT contraction invariant]
\label{def:margin}
The \emph{BBT contraction invariant} of $f$ is
\[
\mu(f) = \prod_{\ell=1}^n 2^{-\Inf_\ell(f)/(1 + \Inf_\ell(f))}.
\]
Equivalently, $\log_2 \mu(f) = -\sum_{\ell=1}^n \frac{\Inf_\ell(f)}{1 + \Inf_\ell(f)}$.

Note that $\mu(f)$ depends only on the function $f$ (via its influence vector), not on any particular ternary mask. The actual \emph{sign margin} of a mask $w$ is $\mu_{\mathrm{PTF}}(w) = \min_i f_i (H_n w)_i \ge 1$ (an integer). We treat the connection between $\mu(f)$ (a function-level geometric quantity) and per-mask sign margin as empirical: the non-cancellation heuristic of Remark~\ref{rem:coord_margin_heuristic} provides intuition for the link, but the formal claim we make in this paper is the conditional $\mu$-vs-minimum-support correlation reported in Section~\ref{sec:mu_predicts_support}.
\end{definition}

\begin{theorem}[Contraction bounds]
\label{thm:margin_bounds}
For any Boolean $f:\{-1,+1\}^n \to \{-1,+1\}$ with total influence $I = I(f)$:
\begin{enumerate}[(i)]
    \item \textbf{Coarse bounds:} $\displaystyle 2^{-I} \le \mu(f) \le 2^{-I/2}$.
    \item \textbf{Jensen bound (tight):} $\displaystyle \log_2 \mu(f) \ge -\frac{I}{1 + I/n}$.
\end{enumerate}
\end{theorem}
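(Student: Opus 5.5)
The plan is to work entirely with $\log_2\mu(f) = -\sum_{\ell=1}^n \varphi(\Inf_\ell(f))$, where $\varphi(x) = x/(1+x)$, so that both parts of the theorem become statements about $\sum_\ell \varphi(\Inf_\ell)$ for an influence vector in $[0,1]^n$ summing to $I$. The only analytic inputs are that $\varphi$ is increasing and concave on $[0,\infty)$, since $\varphi''(x) = -2/(1+x)^3 < 0$, and that $\Inf_\ell(f)\in[0,1]$ for Boolean $f$, as noted in the preliminaries.

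For part (i), I would bound $\varphi$ pointwise on $[0,1]$. Since $1 \le 1+x \le 2$, we have $x/2 \le \varphi(x) \le x$ for $x\in[0,1]$. Summing over $\ell$ gives $I/2 \le \sum_\ell \varphi(\Inf_\ell) \le I$. Negating and exponentiating base $2$ then yields $2^{-I} \le \mu(f) \le 2^{-I/2}$.

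For part (ii), I would apply Jensen's inequality to the concave function $\varphi$ with uniform weights $1/n$:
\[
\frac1n\sum_{\ell=1}^n \varphi(\Inf_\ell) \le \varphi\!\Big(\frac1n\sum_\ell \Inf_\ell\Big) = \varphi(I/n) = \frac{I/n}{1+I/n}.
\]
Multiplying by $n$ gives $\sum_\ell \varphi(\Inf_\ell) \le I/(1+I/n)$, and negating gives $\log_2\mu(f) \ge -I/(1+I/n)$. For tightness, equality in Jensen holds exactly when all $\Inf_\ell$ are equal, because $\varphi$ is strictly concave. So the bound is attained by any function with a uniform influence profile. Parity $\chi_{[n]}$ is one example, with every $\Inf_\ell = 1$, so $I = n$ and both sides equal $-n/2$. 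Symmetric functions such as majority are further examples.

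There is no real obstacle here; the argument is short and routine. The one point needing care is interpreting ``tight'' correctly: it means attained for every $n$ by symmetric-influence functions, not attained for every value of $I$. It is also worth remarking that (ii) strengthens the lower bound in (i), since $I/(1+I/n) \le I$.
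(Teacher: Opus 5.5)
Your proposal is correct and matches the paper's proof essentially step for step. For (i) you use the pointwise bounds $x/2 \le \varphi(x) \le x$ on $[0,1]$, summed over coordinates; for (ii) you apply Jensen's inequality to the strictly concave $\varphi(x)=x/(1+x)$. Your tightness remark, that equality holds exactly for uniform influence profiles such as parity, majority, or constants, is slightly more complete than the paper's, which only checks the parity case.
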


\begin{proof}
Let $\varphi(x) = x/(1+x)$. Since $\varphi$ is increasing and $\varphi(x) \in [x/2, x]$ for $x \in [0,1]$, part~(i) follows by summing over~$\ell$.

For part~(ii), note $\varphi'(x) = 1/(1+x)^2$ and $\varphi''(x) = -2/(1+x)^3 < 0$, so $\varphi$ is strictly \emph{concave}. By Jensen's inequality applied to the $n$ values $\Inf_1, \ldots, \Inf_n$:
\[
\frac{1}{n}\sum_{\ell=1}^n \varphi(\Inf_\ell) \le \varphi\!\left(\frac{I}{n}\right) = \frac{I/n}{1 + I/n},
\]
so $\sum_{\ell=1}^n \varphi(\Inf_\ell) \le n \cdot \frac{I/n}{1+I/n} = \frac{I}{1+I/n}$.
Hence $\log_2 \mu(f) = -\sum_\ell \varphi(\Inf_\ell) \ge -I/(1+I/n)$.
\end{proof}

\begin{remark}
The Jensen bound is a tighter lower bound than $-I$: for $I = n$ (parity), it gives $\log_2 \mu \ge -n/2$, matching the exact value; for $I = O(1)$, it gives $\log_2 \mu \ge -I/(1+I/n) \approx -I$, recovering the coarse lower bound. Since $-I/(1+I/n) \ge -I$ always, this strictly improves on the coarse lower bound $2^{-I} \le \mu(f)$ for intermediate $I$.
\end{remark}

\begin{theorem}[Schur-convexity and influence majorization]
\label{thm:schur}
Define $\Phi(\mathbf{x}) = \sum_{\ell=1}^n \frac{x_\ell}{1 + x_\ell}$ for $\mathbf{x} = (\Inf_1(f), \ldots, \Inf_n(f)) \in [0,1]^n$. Then:
\begin{enumerate}[(i)]
    \item $\Phi$ is \emph{strictly Schur-concave}: if $\mathbf{x} \succ \mathbf{y}$ (majorization) and $\mathbf{x}$ is not a permutation of $\mathbf{y}$, then $\Phi(\mathbf{x}) < \Phi(\mathbf{y})$. (When $\mathbf{x}$ is a permutation of $\mathbf{y}$ we have $\Phi(\mathbf{x}) = \Phi(\mathbf{y})$ since $\Phi$ is symmetric.)
    \item Consequently, $\mu(f) = 2^{-\Phi(\mathbf{x})}$ is \emph{strictly Schur-convex} in the same sense: more concentrated influence (in the strict majorization order, modulo permutation) yields strictly larger $\mu$. For fixed total influence $I = \sum x_\ell$, $\mu$ is minimized when influences are uniform ($x_\ell = I/n$ for all $\ell$) and maximized when influences are maximally concentrated on a single coordinate.
    \item In particular, $\mu$ strictly orders functions by their influence distribution: if $I(f) = I(g)$, $\mathbf{x}(f) \succ \mathbf{x}(g)$, and the influence vectors are not permutations of one another, then $\mu(f) > \mu(g)$.
\end{enumerate}
\end{theorem}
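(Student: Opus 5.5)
The plan is to reduce everything to the standard fact that a symmetric sum of a strictly concave function is strictly Schur-concave. We take $\varphi(x)=x/(1+x)$ from the proof of Theorem~\ref{thm:margin_bounds}, where we already checked that $\varphi''(x)=-2/(1+x)^3<0$ on $[0,1]$. Then $\Phi(\mathbf{x})=\sum_\ell\varphi(x_\ell)$ is symmetric, so part~(i) follows from Karamata's inequality. For self-containedness we would rather give the elementary argument via Robin Hood transfers (T-transforms).

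\textbf{Step 1: reduction to T-transforms.} By the Hardy--Littlewood--P\'olya theorem, $\mathbf{x}\succ\mathbf{y}$ iff $\mathbf{y}$ is obtained from $\mathbf{x}$ by a finite sequence of T-transforms. Each transform replaces a pair $(x_i,x_j)$ with $x_i>x_j$ by $(x_i-\delta,\,x_j+\delta)$ for some $0<\delta\le x_i-x_j$, leaving the other coordinates fixed. All intermediate vectors stay in the convex hull of permutations of $\mathbf{x}$, hence in $[0,1]^n$, so $\varphi$ is only ever evaluated where it is strictly concave. If $\mathbf{x}$ is not a permutation of $\mathbf{y}$, at least one of these transforms is nontrivial, meaning it does not merely swap the two entries. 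Trivial steps can be dropped, since swaps leave the symmetric $\Phi$ unchanged.

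\textbf{Step 2: strict increase under one transform.} For a single nontrivial transform, set $a=x_i$, $b=x_j$, and $s=a+b$. The change in $\Phi$ is $g(t)-g(a)$ with $t=a-\delta$, where $g(u)=\varphi(u)+\varphi(s-u)$. The function $g$ is strictly concave and symmetric about $s/2$, so it is strictly increasing on $[b,s/2]$ and strictly decreasing on $[s/2,a]$. A nontrivial transform sends $a$ to a point strictly closer to $s/2$, so $g$ strictly increases. Chaining the steps gives $\Phi(\mathbf{y})>\Phi(\mathbf{x})$.

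The only delicate point is the bookkeeping in Step~1. We must check that a strict majorization which is not a permutation really forces a non-swap transform. If every step were a swap, the composition would be a permutation, which contradicts the hypothesis. This is the step I expect to be the main obstacle, though it is mild. An alternative route avoids it: the equality case of Karamata's inequality for strictly concave functions.

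\textbf{Step 3: consequences.} Part~(ii) is immediate, because $\mu=2^{-\Phi}$ and $u\mapsto 2^{-u}$ is strictly decreasing, so strict Schur-concavity of $\Phi$ becomes strict Schur-convexity of $\mu$. For the extremal claims at fixed $I$, note that the uniform vector $(I/n,\dots,I/n)$ is majorized by every vector with sum $I$, so it maximizes $\Phi$ and minimizes $\mu$. This is also consistent with the Jensen bound in Theorem~\ref{thm:margin_bounds}(ii). The maximally concentrated vector within $[0,1]^n$ is $(1,\dots,1,r,0,\dots,0)$ with $\lfloor I\rfloor$ ones. It majorizes every vector in $[0,1]^n$ with sum $I$, so it maximizes $\mu$; when $I\le1$ this is exactly concentration on a single coordinate. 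Part~(iii) is a restatement of (ii) with the equal-sum condition $I(f)=I(g)$ made explicit, since majorization already requires equal sums.
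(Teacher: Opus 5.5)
Your proof is correct, but it follows a different route from the paper's.

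\textbf{The paper's route.} The paper proves (i) via the Schur--Ostrowski differential criterion. Since $\partial\Phi/\partial x_\ell=\varphi'(x_\ell)$ and $\varphi''<0$, one has $(x_i-x_j)(\varphi'(x_i)-\varphi'(x_j))<0$ whenever $x_i\neq x_j$, and the cited criterion then yields strict Schur-concavity. Parts (ii) and (iii) are read off exactly as you do.

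\textbf{Your route.} You decompose the majorization into T-transforms (Hardy--Littlewood--P\'olya). You then reduce strictness to a two-point statement about the strictly concave, symmetric function $g(u)=\varphi(u)+\varphi(s-u)$. In effect, you reprove the strict case of Karamata.

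\textbf{What each buys.} The paper's argument is a one-line check. However, it hands two things to the cited theorem: the passage from the local sign condition to global majorization, and the ``not a permutation $\Rightarrow$ strict'' bookkeeping. It also phrases the criterion as an equivalence, though only sufficiency is used. Your argument is self-contained and makes the strictness step explicit. It would work for any strictly concave $\varphi$, differentiable or not.

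\textbf{The step you flagged.} The step you expect to be the main obstacle is harmless. In the standard HLP construction, each T-transform moves the pair by $\delta\le (x_i-x_j)/2$, so no step is ever a swap.

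\textbf{Extremal claims.} Your treatment here is more careful than the paper's. The paper justifies only the minimum at the uniform vector and asserts the maximum without argument. You correctly identify $(1,\dots,1,r,0,\dots,0)$ as the majorization-maximal element of $\{\mathbf{z}\in[0,1]^n:\sum_\ell z_\ell=I\}$. This also shows that ``maximally concentrated on a single coordinate'' is literally feasible only when $I\le 1$.
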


\begin{proof}
We apply the Schur-Ostrowski criterion \cite{marshall2011inequalities}: a symmetric, continuously differentiable function $\Phi:\R^n \to \R$ is strictly Schur-concave if and only if $(x_i - x_j)(\partial \Phi/\partial x_i - \partial \Phi/\partial x_j) < 0$ whenever $x_i \neq x_j$. Here $\partial \Phi/\partial x_\ell = \varphi'(x_\ell) = 1/(1+x_\ell)^2 > 0$, and $\varphi''(x) = -2/(1+x)^3 < 0$ so $\varphi'$ is strictly decreasing. Hence $x_i > x_j \Rightarrow \varphi'(x_i) < \varphi'(x_j)$, giving $(x_i - x_j)(\varphi'(x_i) - \varphi'(x_j)) < 0$. The criterion is satisfied, so $\Phi$ is strictly Schur-concave.

Part~(ii): Since $\mu = 2^{-\Phi}$ and $\Phi$ is strictly Schur-concave, $\mu$ is strictly Schur-convex: $\mathbf{x} \succ \mathbf{y} \Rightarrow \Phi(\mathbf{x}) < \Phi(\mathbf{y}) \Rightarrow \mu(\mathbf{x}) > \mu(\mathbf{y})$. Among vectors in $[0,1]^n$ with fixed sum $I$, the uniform vector $(\frac{I}{n}, \ldots, \frac{I}{n})$ is majorized by all others; hence $\Phi$ is maximized there and $\mu$ is minimized. Part~(iii) is immediate from $\mathbf{x}(f) \succ \mathbf{x}(g) \Rightarrow \mu(f) > \mu(g)$.
\end{proof}

\begin{corollary}[Scaling classes]
\label{cor:scaling}
The contraction invariant classifies canonical function families:
\begin{enumerate}[(i)]
    \item \textbf{Parity} ($f = \prod_i x_i$): Every coordinate has $\Inf_\ell = 1$, so $p_\ell = 2$ for all $\ell$ and $\mu(f) = (2^{-1/2})^n = 2^{-n/2}$.

    \item \textbf{Majority} ($f = \sign(\sum_i x_i)$, $n$ odd): By symmetry all influences are equal: $\Inf_\ell = I(f)/n$. The total influence satisfies $I(\mathrm{Maj}_n) = n \cdot \binom{n-1}{\lfloor n/2 \rfloor}/2^{n-1} = \Theta(\sqrt{n})$ \cite{odonnell2014analysis}. Since influences are uniform, Theorem~\ref{thm:schur} gives $\mu = 2^{-\Phi(I/n, \ldots, I/n)} = 2^{-nI/(n+I)}$. With $I = \Theta(\sqrt{n})$: $\mu = 2^{-\Theta(\sqrt{n})}$.

    \item \textbf{Dictator} ($f = x_k$): $\Inf_k = 1$, all others zero. Hence $\mu = 2^{-1/(1+1)} \cdot 1^{n-1} = 2^{-1/2}$, independent of $n$.

    \item \textbf{Tribes} (OR of ANDs with width $w = \lceil \log_2 n \rceil$): Total influence $I(f) = \Theta(\log n)$ and individual influences are uniformly small: $\Inf_\ell(\mathrm{Tribes}) = O(\log n / n)$ \cite{odonnell2014analysis}. Theorem~\ref{thm:margin_bounds} gives a \emph{lower} bound $\log_2 \mu \ge -I(f)/(1+I(f)/n)$; to nail the asymptotic we use that each individual influence is $o(1)$, so $\Inf_\ell/(1+\Inf_\ell) = \Inf_\ell - O(\Inf_\ell^2)$ and
\[
\sum_\ell \frac{\Inf_\ell(f)}{1+\Inf_\ell(f)} = I(f) - O\!\left(\sum_\ell \Inf_\ell(f)^2\right) = \Theta(\log n).
\]
Therefore $\log_2 \mu(f) = -\Theta(\log n)$, i.e.\ $\mu(f) = n^{-\Theta(1)}$.

    \item \textbf{AND/OR}: $\mathrm{AND}_n(x) = +1$ iff $x_1 = \cdots = x_n = +1$ (and $-1$ otherwise) in the $\{-1,+1\}$ convention. Each coordinate has $\Inf_\ell(\mathrm{AND}_n) = 2^{-(n-1)}$, so $I(f) = n \cdot 2^{-(n-1)} \to 0$ as $n \to \infty$. Theorem~\ref{thm:margin_bounds} then gives $\mu \to 1$. The same holds for $\mathrm{OR}_n$ by symmetry. Note that $\prod_i x_i$ is \emph{parity}, not AND, in the $\{-1,+1\}$ convention; we list AND/OR separately because they have qualitatively different influence profiles from parity despite both being "natural" functions.
\end{enumerate}
\end{corollary}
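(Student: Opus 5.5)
The plan is to treat each family separately. In every case I would compute (or quote) the influence vector $\mathbf{x}(f)$ and substitute it into $\log_2\mu(f) = -\sum_\ell \varphi(\Inf_\ell)$, where $\varphi(x)=x/(1+x)$. Theorems~\ref{thm:margin_bounds} and~\ref{thm:schur} are used only where an exact evaluation is not convenient.

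\textbf{Parity, dictator, AND/OR.} For parity, $\hat f([n])=1$ and every other coefficient vanishes, so $\Inf_\ell = \sum_{S\ni\ell}\hat f(S)^2 = 1$ for all $\ell$. Each term is then $\varphi(1)=1/2$, which gives $\mu=2^{-n/2}$. For the dictator $x_k$, only $\hat f(\{k\})=1$ is nonzero, so $\Inf_k=1$ and $\Inf_\ell=0$ for $\ell\ne k$. Since $\varphi(0)=0$, we get $\mu=2^{-1/2}$.

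For $\mathrm{AND}_n$, flipping $x_\ell$ changes the output exactly when all other coordinates equal $+1$. That event has probability $2^{-(n-1)}$, so $I(\mathrm{AND}_n) = n2^{1-n}\to 0$. Part~(i) of Theorem~\ref{thm:margin_bounds} then sandwiches $\mu$ between $2^{-I}$ and $1$, so $\mu\to 1$. $\mathrm{OR}_n$ is the negation of AND applied to negated inputs, so it has the same influences.

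\textbf{Majority.} All influences are equal by symmetry under permuting coordinates. The pivotality probability gives $\Inf_\ell = \binom{n-1}{(n-1)/2}2^{-(n-1)}$, and Stirling gives $\Inf_\ell\sim\sqrt{2/(\pi n)}$, hence $I=\Theta(\sqrt n)$. With uniform influences the Jensen bound of Theorem~\ref{thm:margin_bounds}(ii) holds with equality: $\log_2\mu = -n\varphi(I/n) = -nI/(n+I)$. Since $I/n\to 0$, this is $-I(1+o(1)) = -\Theta(\sqrt n)$.

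\textbf{Tribes.} This is the only case that needs care. I would take the standard Ben-Or–Linial parametrization: width $w$ with $(1-2^{-w})^{n/w}\approx 1/2$, so $w\approx\log_2 n-\log_2\ln n$. The statement's choice $w=\lceil\log_2 n\rceil$ is less balanced but gives the same order. A coordinate is pivotal when its own tribe's other $w-1$ bits are all true and no other tribe is satisfied. This gives $\Inf_\ell = 2^{-(w-1)}(1-2^{-w})^{n/w-1}$, which is $\Theta(\log n/n)$ for the balanced width, so $I=\Theta(\log n)$.

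Next I would bound $\varphi(x)$ from both sides: $x/2\le \varphi(x) \le x$ for $x\in[0,1]$, and more precisely $\varphi(x)=x-O(x^2)$. The correction term is $\sum_\ell\Inf_\ell^2 \le \max_\ell\Inf_\ell\cdot I = O(\log^2 n/n) = o(1)$, so $\log_2\mu=-I+o(1)=-\Theta(\log n)$. Equivalently, the coarse bounds of Theorem~\ref{thm:margin_bounds}(i) already give $-I\le\log_2\mu\le -I/2$, which suffices for the $\Theta$.

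The main obstacle is pinning down $I(\mathrm{Tribes})=\Theta(\log n)$ for the stated width. If $w=\lceil\log_2 n\rceil$ is used literally, the factor $(1-2^{-w})^{n/w}$ tends to $1$ rather than $1/2$. The function is then biased, and I would need to recheck that $I$ is still $\Theta(\log n)$: here $\Inf_\ell\approx 2^{1-w}=\Theta(1/n)$ and $I=\Theta(1)$ only. So I would state the result for the balanced Tribes parametrization, citing O'Donnell, and note this width caveat explicitly.
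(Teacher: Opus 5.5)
Your proposal is correct and follows essentially the same route as the paper. In each case you compute the influence vector and substitute it into $\log_2\mu=-\sum_\ell\varphi(\Inf_\ell)$, invoking Theorem~\ref{thm:margin_bounds} only where needed. Your ``Jensen with equality'' for majority is the same evaluation of $\Phi$ at a uniform vector that the paper attributes to Theorem~\ref{thm:schur}. Keep your remark that the coarse bounds $-I\le\log_2\mu\le -I/2$ already give $\log_2\mu=-\Theta(I(f))$ for every $f$. It means the $\varphi(x)=x-O(x^2)$ expansion used in the statement is not needed for any of the $\Theta$-claims.

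Your Tribes caveat is correct, and it identifies a real defect in part~(iv) that the paper's proof does not address. That proof just cites $I(\mathrm{Tribes})=\Theta(\log n)$ from O'Donnell, which holds for the balanced parametrization $s\approx 2^w\ln 2$, i.e.\ $w\approx\log_2 n-\log_2\ln n$. With $w=\lceil\log_2 n\rceil$ taken literally, $2^{-(w-1)}\in(1/n,2/n]$ and $(1-2^{-w})^{s-1}\ge 1-s2^{-w}\ge 1-1/w$. Hence $I\in(1-1/w,\,2]$, and the coarse bounds force $\mu=\Theta(1)$, not $n^{-\Theta(1)}$. So state (iv) for balanced Tribes, as you propose. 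Also delete your earlier sentence claiming that the width $\lceil\log_2 n\rceil$ ``gives the same order''; it contradicts your own correct computation at the end of the proposal.
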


\begin{proof}
Parts (i) and (iii) are by direct computation of influences. Part (ii) uses the standard influence formula for majority \cite[Proposition~2.22]{odonnell2014analysis} combined with uniform influence and Theorem~\ref{thm:schur}. Part (iv) uses $I(\mathrm{Tribes}) = \Theta(\log n)$ \cite[Exercise~2.15]{odonnell2014analysis} and the Jensen bound. Part (v) uses $I(\mathrm{AND}_n) = n/2^{n-1}$.
\end{proof}

These analytic predictions are confirmed by numerical computation in Section~\ref{sec:experiments} (Table~\ref{tab:scaling} and Figure~\ref{fig:scaling}).

\paragraph{From contraction to difficulty (empirical, not theorem).}
The contraction invariant $\mu(f)$ is a function-level geometric quantity, not a per-mask one. A small $\mu(f)$ means the butterfly inverse contracts vectors aggressively in the influence-adapted Banach geometry; intuitively, this should make it harder for any single ternary mask $w$ to produce a sign-correct output $\sign(H_n w) = f$ across all $2^n$ inputs simultaneously. The connection to concrete ternary masks is empirical in this paper: in Section~\ref{sec:mu_predicts_support} we measure the conditional correlation between $\mu(f)$ and the minimum-support certificate at $n=4$ (positive in the largest-stratum) and $n=5$ (reversed). Existence of ternary masks at $n \le 4$ is established by computation in the companion synthesis manuscript~\cite{spectralSynthesisCompanion} (Theorem~\ref{thm:universality}) and refined here with exact MILP minimum-support certificates.

\section{Margin Propagation and Cancellation}
\label{sec:cancellation}

The contraction invariant $\mu(f)$ provides a norm-based bound on how the butterfly inverse contracts vectors. Converting this to a \emph{coordinate-wise} sign margin requires an additional structural condition: non-cancellation at intermediate layers.

\begin{lemma}[Norm propagation]
\label{lem:norm_prop}
For any vector $v \in \R^N$ and butterfly layer~$\ell$ with adaptive exponent $p_\ell$:
\[
\|B_\ell^{-1} v\|_{p_\ell} \le 2^{-\Inf_\ell/(1+\Inf_\ell)} \cdot \|v\|_{p_\ell},
\]
where $B_\ell$ is the butterfly operator at layer~$\ell$.
\end{lemma}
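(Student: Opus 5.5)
The plan is to reduce the statement to Lemma~\ref{lem:contraction} by making the structure of $B_\ell$ explicit. The butterfly operator at layer~$\ell$ acts on $\R^N$ as $B_\ell = I_{2^{n-\ell}} \otimes A \otimes I_{2^{\ell-1}}$ (up to the indexing convention for which bit is~$\ell$). Equivalently, after a permutation $P_\ell$ of coordinates that groups together the pairs of indices differing only in bit~$\ell$, we have $B_\ell = P_\ell^\top (I_{N/2}\otimes A) P_\ell$, a block-diagonal operator with $N/2$ copies of the $2\times 2$ matrix $A$. Because $A$ is invertible with $A^{-1} = \tfrac12 A$, the same permutation gives $B_\ell^{-1} = P_\ell^\top (I_{N/2}\otimes A^{-1}) P_\ell$.

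Two facts then finish the argument. First, coordinate permutations are isometries of $\ell_p$ for every $p$, so $\|B_\ell^{-1}\|_{p\to p} = \|I_{N/2}\otimes A^{-1}\|_{p\to p}$. Second, for $1\le p<\infty$ a block-diagonal operator has $\ell_p$ operator norm equal to the maximum of its blocks' norms. To see this, split $v$ into its pairs $v^{(j)}\in\R^2$ for $j=1,\dots,N/2$. Then
\[
\|B_\ell^{-1}v\|_p^p = \sum_j \|A^{-1}v^{(j)}\|_p^p \le \|A^{-1}\|_{p\to p}^p \sum_j \|v^{(j)}\|_p^p = \|A^{-1}\|_{p\to p}^p\,\|v\|_p^p.
\]
Taking $p$-th roots and applying Lemma~\ref{lem:contraction} with $p = p_\ell = 1+\Inf_\ell(f)\in[1,2]$ gives the bound $2^{-\Inf_\ell/(1+\Inf_\ell)}\|v\|_{p_\ell}$. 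Equality is attained by placing a single copy of the extremal vector $e_1$ from Lemma~\ref{lem:butterfly_norm} in one block, so the bound is in fact the exact operator norm.

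There is no real analytic obstacle here. The only point needing care is bookkeeping: fixing the tensor/indexing convention so that $B_\ell$ really pairs the indices differing in bit~$\ell$ and acts on each pair by exactly $A$, not by some rescaled or transposed version. Since $A$ is symmetric and every block is identical, either ordering convention gives the same norm. One should also note that the statement is for a fixed $\ell$ with its own exponent $p_\ell$, so no compatibility between the norms of different layers is required.
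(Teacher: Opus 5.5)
Your proposal is correct and matches the paper's approach. The paper states this lemma without a written proof and treats it as immediate from Lemma~\ref{lem:contraction}, since layer~$\ell$ applies $A$ independently to each pair of indices differing in bit~$\ell$. Your block-diagonal decomposition, with the $p$-th-power sum over pairs, spells out exactly that step. Your remark that the bound is attained by a single $e_1$ block, so it is the exact operator norm, is a small addition the paper does not state.
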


This holds unconditionally. However, norm contraction does not directly imply that all coordinates maintain the correct sign. The missing condition is \emph{non-cancellation}.

\begin{definition}[Layerwise cancellation]
\label{def:cancellation}
Let $v^{(0)} = w$ and $v^{(\ell)} = B_\ell v^{(\ell-1)}$ for $\ell = 1, \ldots, n$, so that $v^{(n)} = H_n w$. For each layer~$\ell$ and each butterfly pair $(i, j)$ in $v^{(\ell-1)}$, define the \emph{pair cancellation ratio}
\[
\rho(a, b) = \frac{\min(|a + b|,\, |a - b|)}{|a| + |b| + \varepsilon},
\]
and the \emph{layer cancellation index}
\[
\rho_\ell = \min_{\text{pairs } (i,j) \text{ at layer } \ell}\, \rho(v^{(\ell-1)}_i,\, v^{(\ell-1)}_j).
\]
\end{definition}

Key values for ternary pairs: $\rho(1, 0) = 1$ (no cancellation), $\rho(1, 1) = 0$ (subtraction cancels), $\rho(1, -1) = 0$ (addition cancels).

\begin{remark}[Non-cancellation heuristic for coordinate margins]
\label{rem:coord_margin_heuristic}
Layerwise non-cancellation provides a useful intuition for why some ternary masks maintain large coordinate margins through the butterfly cascade. If, along every path contributing to a final coordinate, each butterfly pair avoids destructive cancellation by a factor at least $\rho_0$, then that coordinate receives a pathwise lower bound proportional to $\rho_0^n$ times the accumulated input mass along that path. We do not promote this observation to a global $\|w\|_1$ lower bound: the pathwise step requires tracking per-coordinate butterfly subtree mass, which a uniform layerwise $\rho_0$ assumption does not control without further structure. Accordingly we treat the cancellation statistics defined below as \emph{empirical diagnostics} of mask quality, not as proof components.
\end{remark}

\begin{remark}[Input cancellation proxy]
\label{rem:proxy}
The layerwise cancellation $\rho_\ell$ depends on intermediate vectors $v^{(\ell)}$, which are only available \emph{after} choosing a mask~$w$. In practice, we compute an \emph{input cancellation proxy} $\tilde{\rho}_\ell(w) = \mathrm{median}_{\text{pairs}} \rho(w_i, w_j)$ directly on the mask coordinates. This proxy correlates with support size ($r = 0.42$ at $n=3$) and increases under repair ($0.257 \to 0.366$ at $n=4$), indicating that repair finds masks with less destructive interference at intermediate layers.
\end{remark}

\section{Rational Invariant Separation}
\label{sec:rational}

\begin{proposition}[Rationality of BBT quantities]
\label{prop:rational}
For any Boolean function $f:\{-1,+1\}^n \to \{-1,+1\}$:
\begin{enumerate}[(i)]
    \item The Fourier coefficients $\hat{f}(S) \in \frac{1}{2^n}\Z$ are rational with denominator dividing $2^n$.
    \item The influences $\Inf_\ell(f) = \sum_{S \ni \ell} \hat{f}(S)^2$ are rational with denominator dividing $4^n$.
    \item The induced butterfly $\ell_p$ operator norm $2^{1/p_\ell}$ is algebraic over $\mathbb{Q}$: if $p_\ell = a/b$ in lowest terms with $a, b > 0$, it is a root of $x^a - 2^b = 0$.
    \item The contraction invariant $\mu(f) = 2^r$ with $r = \log_2 \mu(f) \in \mathbb{Q}_{\le 0}$ is algebraic of degree dividing the denominator of $r$ in lowest terms. Empirically, this minimal-polynomial degree grows combinatorially with the influence vector (up to degree $35$ at $n=3$).
\end{enumerate}
\end{proposition}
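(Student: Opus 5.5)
The plan is to prove the four parts in order, each building on the previous one. Nothing beyond integrality of the truth table and elementary facts about algebraic numbers is needed.

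\textbf{Part (i).} Since $f$ takes values in $\{-1,+1\}$, the vector $H_n f$ is an integer vector: each entry is a signed sum of $\pm 1$ values. Using $\hat f = \frac{1}{N} H_n f$ with $N=2^n$, every $\hat f(S)$ lies in $\frac{1}{2^n}\Z$.

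\textbf{Part (ii).} Each $\hat f(S)^2$ lies in $\frac{1}{4^n}\Z$, and a finite sum of elements of $\frac{1}{4^n}\Z$ stays in $\frac{1}{4^n}\Z$. So each influence is rational with denominator dividing $4^n$. As a cross-check, the combinatorial definition $\Inf_\ell = \Pr_x[f(x)\ne f(x^{\oplus\ell})]$ gives denominator dividing $2^n$, which is even stronger. Only the weaker statement is claimed, so either route suffices.

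\textbf{Part (iii).} By part (ii), $p_\ell = 1+\Inf_\ell(f)$ is a positive rational. Write it as $a/b$ in lowest terms with $a,b\ge 1$. Then $\alpha = 2^{1/p_\ell} = 2^{b/a}$ satisfies $\alpha^a = 2^b$, so $\alpha$ is a root of the monic integer polynomial $x^a - 2^b$ and is therefore algebraic.

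\textbf{Part (iv).}
\begin{itemize}
\item First, $\log_2\mu(f) = -\sum_\ell \Inf_\ell/(1+\Inf_\ell)$ is a finite sum of quotients of rationals, hence $r\in\mathbb{Q}$. It is nonpositive because each $\Inf_\ell\ge 0$.
\item Write $r = -c/d$ in lowest terms with $d\ge 1$. Then $\mu = 2^{r}$ satisfies $\mu^d = 2^{-c}$, so $\mu$ is a root of $2^c x^d - 1$, or equivalently $\beta = 1/\mu$ is a root of $x^d - 2^c$.
\item To show the degree of $\mu$ over $\mathbb{Q}$ divides $d$, I would use the field-degree tower. Since $\gcd(c,d)=1$, pick integers $u,v$ with $uc+vd=1$. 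Then $2^{1/d} = \beta^{u}\,2^{v}$, so $\mathbb{Q}(\beta)=\mathbb{Q}(2^{1/d})$.
\item The polynomial $x^d-2$ is irreducible over $\mathbb{Q}$ by Eisenstein at the prime $2$. Hence $[\mathbb{Q}(\mu):\mathbb{Q}] = d$ exactly, which in particular divides the denominator of $r$.
\end{itemize}
The empirical remark about degrees up to $35$ at $n=3$ is not a mathematical claim. I would only note that it is consistent with $d$ being the reduced denominator of $\sum_\ell \Inf_\ell/(1+\Inf_\ell)$.

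\textbf{Main obstacle.} The hardest step is pinning down the degree statement in (iv) rigorously. The clean route is the B\'ezout identification $\mathbb{Q}(2^{c/d}) = \mathbb{Q}(2^{1/d})$ combined with Eisenstein. The points to get right are that $c$ may be $0$ (the degenerate case $\mu=1$, degree $1$, which is fine) and that a negative exponent is handled by passing to $1/\mu$. Everything else is bookkeeping with denominators.
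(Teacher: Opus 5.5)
Your proof is correct. For parts (i)--(iii) it follows the paper's argument: integrality of $\sum_x f(x)\chi_S(x)$, squaring and summing, and $2^{b/a}$ as a root of $x^a-2^b$.

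The real difference is in the degree claim of part (iv). The paper exhibits $2^{a}x^{q}-1$, calls it ``the minimal polynomial,'' and asserts that the degree divides $q$. But an annihilating polynomial of degree $q$ only shows the degree is at most $q$. Both the minimality and the divisibility therefore rest on an irreducibility fact the paper never proves. Your B\'ezout step gives $\mathbb{Q}(2^{c/d})=\mathbb{Q}(2^{1/d})$ when $\gcd(c,d)=1$, and Eisenstein at $2$ makes $x^d-2$ irreducible. Together these supply exactly that missing fact.

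They also give the sharper conclusion that the degree equals $d$. Equivalently, $x^d-2^c$ is irreducible, which the paper's word ``minimal'' tacitly presumes. The paper's version is shorter, but yours is the one that actually establishes the statement. The equality form also lets the paper's empirical degree table be read as a table of reduced denominators; for example, degree $35$ at $n=3$ comes from $\tfrac15+\tfrac37+\tfrac37=\tfrac{37}{35}$. Your handling of the edge case $c=0$ (where $d=1$ and $\mu=1$) is also correct.
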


\begin{proof}
Parts (i)--(ii) follow from $\hat{f}(S) = 2^{-n} \sum_x f(x) \chi_S(x)$ with $f, \chi_S \in \{-1, +1\}$. Part (iii): if $p_\ell = a/b$ in lowest terms then $2^{1/p_\ell} = 2^{b/a}$ satisfies $x^a = 2^b$. Part (iv): $\log_2 \mu(f) = \sum_\ell q_\ell$ where each $q_\ell = -\Inf_\ell/(1+\Inf_\ell) \in \mathbb{Q}_{\le 0}$, so $\log_2 \mu(f) = p/q \in \mathbb{Q}_{\le 0}$. Writing this in lowest terms as $r = -a/q$ with $a, q \ge 0$, the value $\mu(f) = 2^{-a/q}$ satisfies the integer-coefficient minimal polynomial $2^{a} x^q - 1 = 0$ (equivalently $x^q = 2^{-a}$). When $r = 0$ (the AND-like limit), $\mu = 1$ trivially. The minimal polynomial has degree dividing $q$.
\end{proof}

\begin{remark}
This corrects an earlier draft that called these quantities ``Banach eigenvalues'' and stated the integer polynomial as $x^q - 2^p$ without separating the sign of $p$. The matrix eigenvalues of the butterfly $A$ are $\pm \sqrt{2}$, independent of $p$; the $p$-dependent quantity is the induced operator norm. An even earlier formulation incorrectly claimed BBT operator norms are transcendental --- they are algebraic, but of growing degree.
\end{remark}

\begin{proposition}[$\log_2\mu(f)$ is rational in Fourier data; $\mu(f)$ is algebraic]
\label{prop:nonpoly}
The quantity $\log_2 \mu(f)$ is a rational function of the Fourier coefficients $\{\hat{f}(S)\}$, involving division by $(1 + \Inf_\ell(f))$, but is not expressible as a polynomial in those coefficients. Consequently, $\mu(f) = 2^{\log_2\mu(f)}$ is \emph{algebraic} (a power of $2$ with rational exponent), not itself a rational function of Fourier data. (Here ``not polynomial'' means not representable as a polynomial identity in the formal Fourier variables on an open domain; on any finite $n$-variable Boolean-function universe, arbitrary invariants can be interpolated by high-degree polynomials in finitely many sample values.)
\end{proposition}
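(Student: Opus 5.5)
The proof has three parts: rationality of $\log_2\mu(f)$, non-polynomiality of $\log_2\mu(f)$, and algebraicity together with non-rationality of $\mu(f)$.

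\textbf{Rationality of $\log_2\mu$.} Treat the Fourier coefficients as formal variables $c_S$, $S\subseteq[n]$. Each influence $\Inf_\ell=\sum_{S\ni\ell}c_S^2$ is a polynomial $P_\ell(c)$. Hence
\[
\log_2\mu=-\sum_{\ell=1}^n \frac{P_\ell}{1+P_\ell}=-\frac{\sum_\ell P_\ell\prod_{k\neq\ell}(1+P_k)}{\prod_\ell(1+P_\ell)},
\]
which is a quotient of polynomials. The denominator is at least $1$ on all of $\R^{2^n}$, so this is a rational function defined everywhere.

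\textbf{Non-polynomiality.} I would suppose $\log_2\mu$ agrees with a polynomial $Q$ on some open set $U$. Two rational functions that agree on an open set are equal identically, so $Q\cdot\prod_\ell(1+P_\ell)=-\sum_\ell P_\ell\prod_{k\neq\ell}(1+P_k)$ in $\R[c]$. The simplest way to reach a contradiction is to restrict to a line. Fix $\ell=1$ and set $c_{\{1\}}=t$ with every other variable held at a point of $U$. Then $\log_2\mu$ becomes $-\tfrac{t^2+a}{1+t^2+a}$ plus a constant, where $a\ge0$ is constant in $t$. This function has poles at $t=\pm i\sqrt{1+a}$, so it is not a polynomial in $t$. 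But the restriction of $Q$ to the line is a polynomial in $t$, and it agrees with this function on an interval. By the identity theorem it would agree everywhere on $\mathbb{C}$, which the poles forbid.

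An equivalent route uses growth rates: the restricted function is bounded as $t\to\infty$ but non-constant, so it cannot be a polynomial. Neither route needs $c$ to be the Fourier vector of a Boolean function, which is why the caveat in the statement about finite universes is needed.

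\textbf{Algebraicity and non-rationality of $\mu$.} For algebraicity, I would cite Proposition~\ref{prop:rational}(iv): on Boolean data $\log_2\mu\in\mathbb{Q}$, so $\mu=2^{r}$ is algebraic. For the claim that $\mu$ is not a rational function of Fourier data, I would use the same line restriction. On that line, $\mu(t)=C\cdot 2^{-(t^2+a)/(1+t^2+a)}$. Suppose this equals a rational function $R(t)$ on an interval. Then $R$ extends meromorphically to $\mathbb{C}$. But the expression $2^{g(t)}=\exp(g(t)\ln 2)$, with $g$ having a pole at $t_0=i\sqrt{1+a}$, has an essential singularity at $t_0$. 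Since $R$ would have to continue this expression analytically, and a rational function has at worst poles, this is a contradiction.

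\textbf{Main obstacle.} The most delicate step is making precise what ``expressible as'' means on the constrained domain. The Boolean Fourier vectors form a finite set, and even the Parseval sphere is not open in $\R^{2^n}$. The statement's parenthetical effectively stipulates the formal-variable, open-domain setting. I would note that all the arguments above still work after restricting to the real-analytic sphere $\sum c_S^2=1$. There one varies $c_{\{1\}}$ and compensates with another coordinate $c_T$ where $1\notin T$. This keeps the pole structure in $t$, because the compensating change does not enter $P_1$.
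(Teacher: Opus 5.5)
Your proof is correct, but it establishes the two negative claims by a different mechanism than the paper.

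\textbf{Non-polynomiality of $\log_2\mu$.} The paper only observes that $(1+\Inf_\ell)^{-1}$ introduces a division. That alone is not a proof, since a quotient of polynomials can itself be a polynomial. Your restriction to the line $c_{\{1\}}=t$ supplies the missing justification. On that line you isolate $-1+\frac{1}{1+a+t^2}$, which has genuine poles at $t=\pm i\sqrt{1+a}$; equivalently, it is bounded and non-constant on $\R$.

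\textbf{Non-rationality of $\mu$.} The paper argues arithmetically. A rational function evaluated at the rational inputs $\hat f(S)\in 2^{-n}\Z$ returns a rational number, whereas $\mu(f)=2^{-a/q}$ is irrational when $q>1$ (e.g.\ $2^{-1/2}$ for a dictator). That argument works directly at Boolean data points, but it only excludes rational functions with rational coefficients. With real coefficients it is defeated by the interpolation phenomenon the parenthetical warns about. Your essential-singularity argument for $2^{g(t)}$ at $t=i\sqrt{1+a}$ excludes rational functions with arbitrary real or complex coefficients. However, it works only in the formal/open-domain sense and says nothing about the finite set of Boolean points. The two arguments therefore establish complementary readings of ``not a rational function of Fourier data.''

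\textbf{Algebraicity.} This part coincides with the paper's: both reduce to $\log_2\mu\in\mathbb{Q}$ and the equation $2^a x^q=1$.

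\textbf{Parseval-sphere remark.} Here you are looser than elsewhere. Along a compensating curve on the sphere, most cleanly with $T=\emptyset$ since $c_\emptyset$ enters no influence, a polynomial $Q$ restricts to an expression of the form $A(t)+B(t)\sqrt{s(t)}$, not to a polynomial in $t$. The finishing step should then be that such an expression stays bounded near $\pm i\sqrt{1+a}$, or that squaring and comparing poles gives a contradiction. The identity theorem for polynomials, as you wrote it, does not apply directly there.
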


\begin{proof}
$\log_2\mu(f) = -\sum_\ell \Inf_\ell/(1+\Inf_\ell)$. Since $\Inf_\ell = \sum_{S \ni \ell} \hat{f}(S)^2$ is polynomial in $\hat{f}(S)$, but $(1+\Inf_\ell)^{-1}$ introduces division, $\log_2\mu$ is a rational (non-polynomial) function of $\hat{f}(S)$. Now $\mu(f) = 2^{r}$ where $r = \log_2\mu(f) \in \mathbb{Q}_{\le 0}$ (by Proposition~\ref{prop:rational}(ii)). Writing $r = -a/q$ in lowest terms with $a, q \ge 0$,
\[
\mu(f) = 2^{-a/q}
\quad\text{satisfies}\quad
2^a x^q - 1 = 0,
\]
hence $\mu(f)$ is algebraic over $\mathbb{Q}$. A rational function of the $\hat{f}(S)$ would take values in $\mathbb{Q}$ when inputs are rational; but $2^{-a/q} \notin \mathbb{Q}$ for $q > 1$ in general, so $\mu(f)$ is not a rational function of Fourier data.
\end{proof}

\begin{theorem}[Separation beyond total influence]
\label{thm:separation}
There exist Boolean functions $f, g:\{-1,+1\}^n \to \{-1,+1\}$ with $I(f) = I(g)$ but $\mu(f) \neq \mu(g)$.

At $n=3$: there are 122 pairs of functions with identical total influence but distinct margin products.
\end{theorem}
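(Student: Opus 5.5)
The existence claim needs one explicit pair; the count of $122$ is a finite enumeration. For existence I would use Theorem~\ref{thm:schur}(iii): find two Boolean functions on the same $n$ whose total influences agree but whose influence vectors are not permutations of each other. Since the two vectors have the same sum, one strictly majorizing the other forces $\mu(f)\neq\mu(g)$.

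A clean witness is at $n=2$. Take the dictator $f(x)=x_1$, with influence vector $(1,0)$. Take $g=\mathrm{AND}_2$, whose Fourier expansion is $\tfrac12(-1+x_1+x_2+x_1x_2)$, so each influence is $\tfrac14+\tfrac14=\tfrac12$ and the vector is $(\tfrac12,\tfrac12)$. Both have $I=1$, and $(1,0)\succ(\tfrac12,\tfrac12)$ strictly. Directly, $\log_2\mu(f)=-\tfrac12$ while $\log_2\mu(g)=-2\cdot\tfrac{1/2}{3/2}=-\tfrac23$.

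To obtain the pair at $n=3$, I would pad both functions with a dummy variable $x_3$. This gives vectors $(1,0,0)$ and $(\tfrac12,\tfrac12,0)$ with the same $\mu$ values as before, so the existence statement holds for every $n\ge2$. (For $n=1$ the only influence vectors are $(0)$ and $(1)$, so no pair exists.)

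For the specific count, I would enumerate all $256$ functions on $n=3$. For each, compute the Fourier coefficients by the FWHT, the influences $\Inf_\ell\in\frac1{16}\Z$ (Proposition~\ref{prop:rational}(ii)), $I(f)$, and $\log_2\mu(f)$, all in exact rational arithmetic. Then group the functions by $I$ and count the pairs within each group that have distinct $\log_2\mu$. Exact arithmetic matters because equality testing of $\mu$ values must not rely on floating point; comparing the rationals $\log_2\mu$ is equivalent to comparing $\mu$.

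The main obstacle is interpretive, not mathematical. I must fix what ``pair'' means: ordered or unordered pairs of functions, or pairs of distinct influence-vector classes up to permutation. The number $122$ is sensitive to this choice, and a naive count of unordered function pairs within each $I$-group is likely far larger than $122$. So I would first reproduce the count under several conventions, for example distinct (sorted influence vector) classes, or NPN orbits. I would then state the convention that yields $122$ explicitly, treating that part of the statement as a certified computation and the existence part as the analytic consequence above.
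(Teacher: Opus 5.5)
Your proposal follows the paper's proof: the paper likewise gives one explicit pair at $n=3$ (truth-table indices $3$ and $15$, with influence vectors $(\tfrac12,\tfrac12,0)$ and $(1,0,0)$ and exponents $-\tfrac23$ and $-\tfrac12$, which are exactly your padded $\mathrm{AND}_2$ and dictator), and for the count it simply appeals to exhaustive enumeration of the $256$ functions. Your caution about the number $122$ is justified. Carrying out that enumeration (quick to do over the $14$ NPN classes) shows that only two total-influence levels carry more than one value of $\mu$. At $I=1$ these are the $24$ functions that are $\mathrm{AND}$-type in two of the variables (exponent $-\tfrac23$) and the $6$ dictators (exponent $-\tfrac12$). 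At $I=2$ they are the $6$ functions $\pm x_ix_j$ (exponent $-1$) and the $24$ functions $x_k h$ with $h$ an $\mathrm{AND}$-type function of the other two variables (exponent $-\tfrac76$). This gives $288$ unordered pairs of functions ($576$ ordered) and only $2$ pairs of NPN or influence-multiset classes. None of your candidate conventions yields $122$, so you should report the enumerated count rather than search for a convention that matches.
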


\begin{proof}
By exhaustive computation over all 256 Boolean functions on 3 variables. For example, functions with indices 3 and 15 (in truth-table ordering) both have $I(f) = 1$, but their margin exponents are $-2/3$ and $-1/2$ respectively, since their influence distributions differ: one has $(\Inf_1, \Inf_2, \Inf_3) = (1/2, 1/2, 0)$ while the other has $(1, 0, 0)$.
\end{proof}

\section{Algorithms}
\label{sec:algorithms}

\subsection{Heuristic Mask Generation}

Given Fourier coefficients $\hat{f}(S)$, the simplest heuristic mask is:
\begin{equation}
\label{eq:heuristic}
    w_S = \begin{cases}
        \sign(\hat{f}(S)) & \text{if } |\hat{f}(S)| > \tau, \\
        0 & \text{otherwise},
    \end{cases}
\end{equation}
for a threshold $\tau > 0$. At $n=4$ with $\tau = 0.05$, this achieves perfect masks for 51{,}200 of 65{,}536 functions (78.1\%).

\subsection{Multi-Start Repair (companion summary)}

The companion synthesis manuscript~\cite{spectralSynthesisCompanion} uses the following deterministic multi-start repair algorithm for the remaining $21.9\%$. We summarize it because its output masks are the inputs to our cancellation diagnostics in Section~\ref{sec:experiments} and provide the ``heuristic-repair'' baseline that the MILP minimum-support comparison in Table~\ref{tab:support} contrasts against.

\begin{algorithm}[t]
\caption{Multi-Start Ternary Repair}
\label{alg:repair}
\begin{algorithmic}[1]
\REQUIRE Truth table $f \in \{-1,+1\}^N$, Walsh matrix $H \in \{-1,+1\}^{N \times N}$
\ENSURE Ternary mask $w \in \{-1,0,+1\}^N$ with $\sign(Hw) = f$, or FAIL
\STATE \textbf{Strategy 1:} $w_0 \gets$ heuristic mask (Eq.~\ref{eq:heuristic})
\STATE Run \textsc{GreedyRepair}$(w_0, f, H)$; if success, return $w$
\FOR{$\tau \in \{0.01, 0.1, 0.2, 0.3\}$}
    \STATE \textbf{Strategy 2:} $w_0 \gets \sign(\hat{f}) \cdot \mathbf{1}[|\hat{f}| > \tau]$
    \STATE Run \textsc{GreedyRepair}$(w_0, f, H)$; if success, return $w$
\ENDFOR
\STATE \textbf{Strategy 3:} Compute $\delta = f^\top H$ (full violation gradient)
\FOR{$k = 1, \ldots, N$}
    \STATE $w_0 \gets$ top-$k$ coordinates of $|\delta|$, with signs from $\sign(\delta)$
    \STATE If $\sign(H w_0) = f$, return $w_0$
\ENDFOR
\RETURN FAIL
\end{algorithmic}
\end{algorithm}

\begin{algorithm}[t]
\caption{\textsc{GreedyRepair}}
\label{alg:greedy}
\begin{algorithmic}[1]
\REQUIRE Initial mask $w$, truth table $f$, Walsh matrix $H$, max iterations $T$
\STATE $V \gets \{i : f_i (Hw)_i \le 0\}$ \COMMENT{violation set}
\FOR{$t = 1, \ldots, T$}
    \IF{$V = \emptyset$} \RETURN $w$ (success) \ENDIF
    \STATE $\delta_S \gets \sum_{i \in V} f_i H_{iS}$ for all $S$ \COMMENT{violation gradient}
    \STATE Choose $S^* = \argmax_S |\delta_S|$ among candidates that reduce $|V|$
    \STATE Update $w_{S^*} \gets \mathrm{clip}_{\{-1,0,+1\}}(w_{S^*} + \sign(\delta_{S^*}))$
    \STATE Update $V$
\ENDFOR
\end{algorithmic}
\end{algorithm}

\paragraph{Complexity.} Computing the BBT profile requires $O(N \log N)$ for the FWHT plus $O(N \cdot n)$ for influences. Each repair iteration costs $O(N)$ for the violation gradient (via $\delta = f[V]^\top H[V, :]$). With at most $T$ iterations and $R$ restarts, total repair cost per function is $O(R \cdot T \cdot N)$.

\begin{proposition}[Strategy 3 as Fourier rounding]
\label{prop:fourier_round}
In Algorithm~\ref{alg:repair}, the ``full violation gradient'' $\delta = f^\top H$ satisfies $\delta_S = N \hat{f}(S)$ for all subsets $S$. Hence Strategy~3 is equivalent to: sort Fourier coefficients by magnitude, take the top-$k$ with signs matching $\sign(\hat{f}(S))$, and verify exact sign realization.
\end{proposition}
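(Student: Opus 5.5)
The plan is to compute $\delta$ entrywise and compare it with the definition of the Fourier coefficients in \eqref{eq:fourier}. Index rows of $H = H_n$ by inputs $x \in \{-1,+1\}^n$ and columns by subsets $S \subseteq [n]$. In this indexing the tensor structure $H_n = H_1^{\otimes n}$ gives $H_{x,S} = \chi_S(x) = \prod_{i\in S} x_i$. The one thing to check is the bit convention: a bit value $0$ corresponds to $x_i = +1$, and membership $i \in S$ corresponds to a $1$ in the column index. With that convention the $(x,S)$ entry of $H_1^{\otimes n}$ is the product over coordinates of $(-1)^{b_i s_i}$, which is exactly $\chi_S(x)$.

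With this identification, the $S$-th entry of the row vector $f^\top H$ is
\[
\delta_S = \sum_x f(x)\,\chi_S(x) = N \cdot \E_x[f(x)\chi_S(x)] = N\hat f(S).
\]
This is the same computation as the identity $\hat f = \frac{1}{N} H_n f$ stated in the preliminaries, using the symmetry $H^\top = H$. So the first claim amounts to transposing that identity.

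For the equivalence, multiplication by the positive constant $N$ changes neither the ordering of the $|\delta_S|$ nor the signs. Hence $\sign(\delta_S) = \sign(\hat f(S))$, and the top-$k$ coordinates of $|\delta|$ are the top-$k$ Fourier coefficients by magnitude. Consequently the mask $w_0$ built in Strategy 3 is $\sign(\hat f(S))$ on the $k$ largest $|\hat f(S)|$ and $0$ elsewhere. The loop then checks $\sign(Hw_0)=f$, which is the stated verification step.

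Only two minor points need care. First, ties in $|\hat f(S)|$ make "top-$k$" ambiguous. I would state that any fixed tie-breaking rule applied to $|\delta|$ applies identically to $|\hat f|$, since the two vectors differ by a positive scalar. Second, the index-convention check in the first paragraph must be done explicitly. Beyond these, the argument is routine.
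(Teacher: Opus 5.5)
Your proposal is correct and follows the paper's proof, which is the one-line computation $\delta_S = \sum_x f(x)\chi_S(x) = N\hat{f}(S)$ straight from the definition of the Fourier coefficient. Your check that $H_{x,S} = \chi_S(x)$ under the tensor-product indexing, and your remark that the positive factor $N$ preserves both the magnitude ordering (including any tie-breaking rule) and the signs, make explicit what the paper leaves implicit for the equivalence with Strategy~3.
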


\begin{proof}
$\delta_S = \sum_{i=1}^N f(x_i) \chi_S(x_i) = N \cdot \hat{f}(S)$ by definition of the Fourier coefficient.
\end{proof}

In the companion certification pipeline, this ``sorted Fourier rounding'' proved decisive at $n=4$: it directly solves all $14{,}336$ functions where heuristic thresholding failed, by considering lower-magnitude coefficients that the threshold discarded.

\section{Computational Universality}
\label{sec:universality}

The companion manuscript~\cite{spectralSynthesisCompanion} (currently under review) establishes computationally:

\begin{theorem}[Ternary Walsh-threshold representability through $n=4$, \cite{spectralSynthesisCompanion}]
\label{thm:universality}
For every Boolean function $f:\{-1,+1\}^n \to \{-1,+1\}$ with $n \le 4$, there exists a ternary weight vector $w \in \{-1,0,+1\}^{2^n}$ such that $\sign(H_n w) = f$. The full set of certificates is published as supplementary material to~\cite{spectralSynthesisCompanion}.
\end{theorem}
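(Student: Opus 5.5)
The plan is to prove Theorem~\ref{thm:universality} by exhaustive certified computation, not by a uniform analytic argument. For each $n\le 4$, enumerate all $2^{2^n}$ truth tables $f\in\{-1,+1\}^{N}$ and exhibit, for each, an explicit $w\in\{-1,0,+1\}^N$. Verification is cheap: compute $H_n w$ by the FWHT and check $f_i(H_n w)_i\ge 1$ for all $i$. By the integrality remark after Definition~\ref{def:ptf}, this exact integer check certifies $\sign(H_nw)=f$ with no floating-point issues.

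\textbf{Reducing the work.} Two symmetries cut down the enumeration.
\begin{itemize}
\item \emph{Negation.} Negating $f$ is handled by negating $w$, so only half the functions need masks.
\item \emph{Input symmetries.} Permuting or negating input variables maps characters to $\pm$ characters: $\chi_S(x^{\sigma})$ equals $\pm\chi_{\sigma(S)}(x)$. Hence ternary masks transport along NPN orbits while staying ternary.
\end{itemize}
So it suffices to certify one representative per NPN class: 2, 3, 14 and 222 classes for $n=1,2,3,4$. I would still verify all $65{,}536$ at $n=4$ directly, since this is trivial and avoids relying on orbit bookkeeping.

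\textbf{Producing the masks.} I would run Algorithm~\ref{alg:repair} in order:
\begin{enumerate}
\item the heuristic threshold mask of Eq.~\eqref{eq:heuristic};
\item \textsc{GreedyRepair} over the threshold sweep;
\item the sorted Fourier rounding of Proposition~\ref{prop:fourier_round}.
\end{enumerate}
Strategy~3 tries top-$k$ prefixes of $\sign(\hat f)$ ordered by $|\hat f(S)|$. Any function still failing would be sent to an exact solver. One option is a MILP: $\min\|w\|_1$ subject to $f_i(H_nw)_i\ge1$ with $w\in\{-1,0,1\}^N$. The other is plain brute force over the $3^{16}\approx 4.3\times10^7$ masks at $n=4$, which is feasible per NPN representative.

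\textbf{Main obstacle.} The hard step is guaranteeing that no function lacks a ternary mask. A priori, the restriction to ternary weights could fail where integer weights succeed: degree/size bounds for general PTFs allow weights larger than 1. There is no analytic argument for this, so completeness rests on the exhaustive exact search. At $n=4$ that search is decisive: either MILP infeasibility proofs or full enumeration of the $3^{16}$ masks for each of the 222 NPN classes would settle it. The final step is to publish the certificate table and an independent verifier that rechecks every $(f,w)$ pair by integer FWHT.
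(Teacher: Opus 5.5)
Your proposal is correct and follows the paper's route: the theorem is certified by computation, with exhaustive search over the $3^8$ masks at $n=3$, heuristic thresholding plus multi-start repair and sorted Fourier rounding at $n=4$, and exact integer verification of every $(f,w)$ pair. Strategy~3 alone already clears all $14{,}336$ threshold failures, so your exact-solver fallback is never needed, and the paper's MILP minimum-support run gives a second certificate. One harmless slip: there are $4$ NPN classes at $n=2$ (constant, dictator, AND-type, XOR-type), not $3$, which does not matter since you enumerate every truth table directly.
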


We use the certified $n \le 4$ universe as a finite controlled testbed for the BBT diagnostics. The verification is computational: at $n=3$ all $256$ functions are exhaustively checked against all $3^{2^3}=6{,}561$ ternary masks; at $n=4$ all $65{,}536$ functions are solved via heuristic + multi-start repair and the resulting masks verified by explicit matrix multiplication. We do \emph{not} re-prove this theorem in the present paper --- our contribution is the influence-adaptive Walsh geometry that organises the difficulty landscape underneath it. We do, however, briefly summarise the support statistics from~\cite{spectralSynthesisCompanion} in Table~\ref{tab:support} for the reader's convenience, and use them to frame the diagnostic experiments of Section~\ref{sec:experiments}.

\begin{remark}
The $n \le 2$ case is trivially verified ($16$ functions, $81$ ternary masks each). Whether universality extends to $n \ge 5$ remains open. At $n=5$ the function space has $2^{32} \approx 4.3 \times 10^9$ elements, precluding brute-force enumeration, but NPN equivalence reduces this to $616{,}126$ classes. Recent CP-SAT and MILP solvers can plausibly attack the per-NPN-class feasibility problem at $n=5$; we list this as a natural frontier in the open-problems section.
\end{remark}

\subsection{Support and Sparsity Statistics: minimum-support MILP vs.\ heuristic repair}
\label{sec:support_stats}

The companion paper~\cite{spectralSynthesisCompanion} reports support distributions obtained from a heuristic repair algorithm (mean $5.1$ at $n=3$, mean $10.8$ at $n=4$). We re-run the certification with a per-function MILP that solves
\[
\min \|w\|_0 \quad\text{s.t.}\quad f_i \cdot (H_n w)_i \ge 1,\quad w_j \in \{-1,0,+1\},
\]
encoded as a binary IP via $w = w^+ - w^-$ with mutual-exclusion constraints, and solved with HiGHS through \texttt{scipy.optimize.milp}. At $n=4$ all $65{,}536$ MILP instances terminate with an optimality certificate under HiGHS' integer tolerances; the returned masks are then verified by exact integer multiplication ($H_n \in \{\pm 1\}^{16 \times 16}$, $w \in \{-1, 0, +1\}^{16}$, so $f_i (H_n w)_i \in \mathbb{Z}$ and the strict-margin constraint $f_i (H_n w)_i \ge 1$ is checkable without floating-point. We therefore treat the resulting support values as exact minimum supports.) Wall-clock: $\sim 9.9$ minutes (single-machine, $7$ workers, $\sim 110$ functions/sec). Table~\ref{tab:support} reports both the MILP minimum and the companion heuristic's support side-by-side.

\begin{table}[t]
\centering
\small
\caption{Support statistics for the certified ternary PTF universe at $n=4$. The MILP-minimum column is the per-function minimum support computed in this paper; the heuristic-repair column is from the companion synthesis paper~\cite{spectralSynthesisCompanion}. The two columns measure different things --- the MILP gives the global minimum-support representation per function, while the heuristic gives the support of the first mask found by repair from a Fourier-rounded initialization. The discrepancy is informative: the heuristic over-allocates by a factor of $\sim 1.7\times$ on average.}
\label{tab:support}
\begin{tabular}{lcccc}
\toprule
& Mean support & Max support & Mean sparsity & Min sparsity \\
\midrule
MILP minimum (this paper) & $\mathbf{6.42}$ & $\mathbf{9}$ & $59.9\%$ & $43.8\%$ \\
Heuristic repair~\cite{spectralSynthesisCompanion} & $10.8$ & $16$ & $32.5\%$ & $0\%$ \\
\bottomrule
\end{tabular}
\end{table}

The MILP-minimum support distribution at $n=4$ is concentrated tightly: $32$ functions ($0.05\%$) have support $1$, $1{,}120$ ($1.7\%$) have support $3$, $18{,}176$ ($27.7\%$) have support $5$, $44{,}800$ ($68.4\%$) have support $7$ (the mode), and $1{,}408$ ($2.1\%$) have support $9$, with \emph{no function requiring more}. (The $32$ support-$1$ functions are the $2$ constants and the $30 = 15 \times 2$ single-Walsh-character functions $\pm \chi_S$ for non-empty $S \subseteq [4]$.) Empirically, every minimum-support certificate at $n=4$ has odd support. A simple parity observation explains why odd support is natural: if $k = |\supp(w)|$, then each coordinate $(H_n w)_i = \sum_S H_{i,S} w_S$ is an integer sum of $k$ terms in $\{\pm 1\}$ (the Hadamard entries are $\pm 1$ and ternary $w_S \in \{-1, 0, +1\}$ with $|\supp(w)| = k$), hence has parity $k \bmod 2$. Odd support therefore automatically excludes zero outputs, whereas even support permits zero outputs. This parity observation does \emph{not} by itself prove that an optimal certificate must have odd support --- an even-support certificate could in principle achieve all margins $\ge 2$ --- but it explains the empirical preference for odd-support certificates in the strict-margin problem.

The discrepancy with the companion's heuristic ($10.8$ vs.\ $6.42$, $16$ vs.\ $9$) is not a contradiction --- the heuristic is initialised from a Fourier sign-rounding and refined by repair, and is fast ($\sim 3$ seconds for all $65{,}536$ functions on CPU) but does not target minimum support. Our MILP is slower ($\sim 10$ minutes) but gives the global per-function minimum. Both are correct certifications of the universality theorem; they answer different questions.

\subsection{NPN Equivalence Classes}

Under NPN equivalence (negation of inputs, permutation of inputs, negation of output), the 256 functions at $n=3$ collapse to 14 classes, and the 65{,}536 functions at $n=4$ collapse to 222 classes. All 222 classes at $n=4$ are successfully represented post-repair; the heuristic-only success rate varies by class from 50\% to 100\%, with the hardest classes corresponding to high-influence, balanced functions.

\section{Experiments}
\label{sec:experiments}

\subsection{Scaling Law Verification}

We compute BBT profiles for five canonical function families at $n = 3, 5, 7, 9, 11, 13, 15$, verifying the scaling predictions of Corollary~\ref{cor:scaling}. Figure~\ref{fig:scaling} shows the scaling curves.

\begin{figure}[t]
\centering
\includegraphics[width=0.85\textwidth]{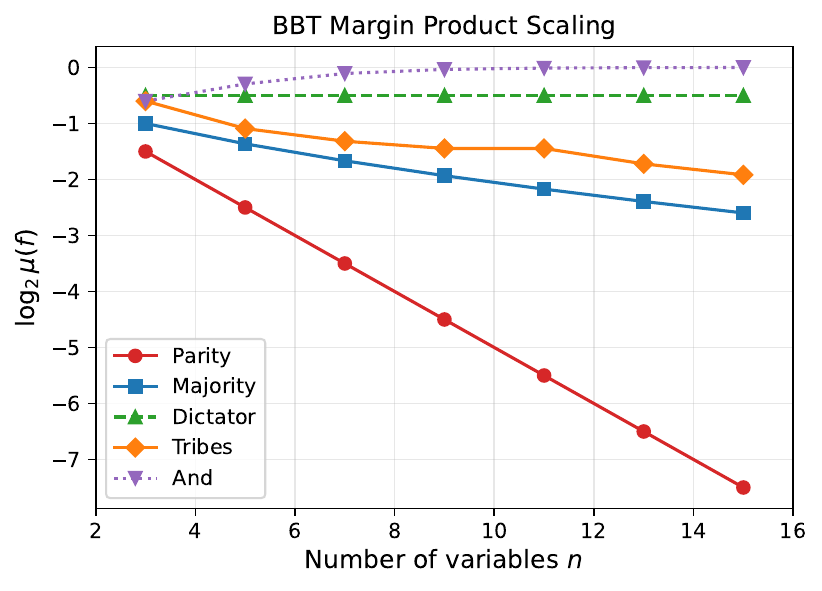}
\caption{BBT contraction invariant $\log_2 \mu(f)$ versus number of variables $n$ for five canonical function families. Parity decays linearly ($-n/2$), majority subexponentially ($-\Theta(\sqrt{n})$), dictator is constant ($-1/2$), tribes grows logarithmically, and AND approaches 0. All curves match the analytical predictions of Corollary~\ref{cor:scaling}.}
\label{fig:scaling}
\end{figure}

\begin{table}[t]
\centering
\small
\caption{Margin product $\log_2 \mu(f)$ across scales. Values match predicted asymptotic classes.}
\label{tab:scaling}
\begin{tabular}{lrrrrrrr}
\toprule
\textbf{Function} & $n{=}3$ & $n{=}5$ & $n{=}7$ & $n{=}9$ & $n{=}11$ & $n{=}13$ & $n{=}15$ \\
\midrule
Parity & $-1.50$ & $-2.50$ & $-3.50$ & $-4.50$ & $-5.50$ & $-6.50$ & $-7.50$ \\
Majority & $-1.00$ & $-1.36$ & $-1.67$ & $-1.93$ & $-2.17$ & $-2.39$ & $-2.60$ \\
Dictator & $-0.50$ & $-0.50$ & $-0.50$ & $-0.50$ & $-0.50$ & $-0.50$ & $-0.50$ \\
AND & $-0.60$ & $-0.29$ & $-0.11$ & $-0.04$ & $-0.01$ & $-0.00$ & $-0.00$ \\
Tribes & $-0.60$ & $-1.09$ & $-1.32$ & $-1.45$ & $-1.45$ & $-1.72$ & $-1.92$ \\
\bottomrule
\end{tabular}
\end{table}

As predicted: parity decays as $-n/2$ (exactly), dictator is constant at $-0.5$, AND approaches 0, and majority and tribes exhibit subexponential growth. The BBT margin product quantitatively matches the theoretical scaling classes.

\subsection{Cancellation Index}

At $n=3$ with proven optimal masks, the cancellation index $\rho$ shows Pearson correlation $r = 0.42$ with support size, making it the strongest predictor of ternary representation complexity among our metrics.

At $n=4$, the repair algorithm increases mean cancellation from $\tilde\rho = 0.257$ (heuristic) to $\tilde\rho = 0.366$ (post-repair), confirming that repair reduces destructive interference (Figure~\ref{fig:cancellation}). The cancellation proxy correlates with heuristic mask accuracy ($r = 0.45$), indicating its value as a predictive diagnostic.

\begin{figure}[t]
\centering
\includegraphics[width=0.85\textwidth]{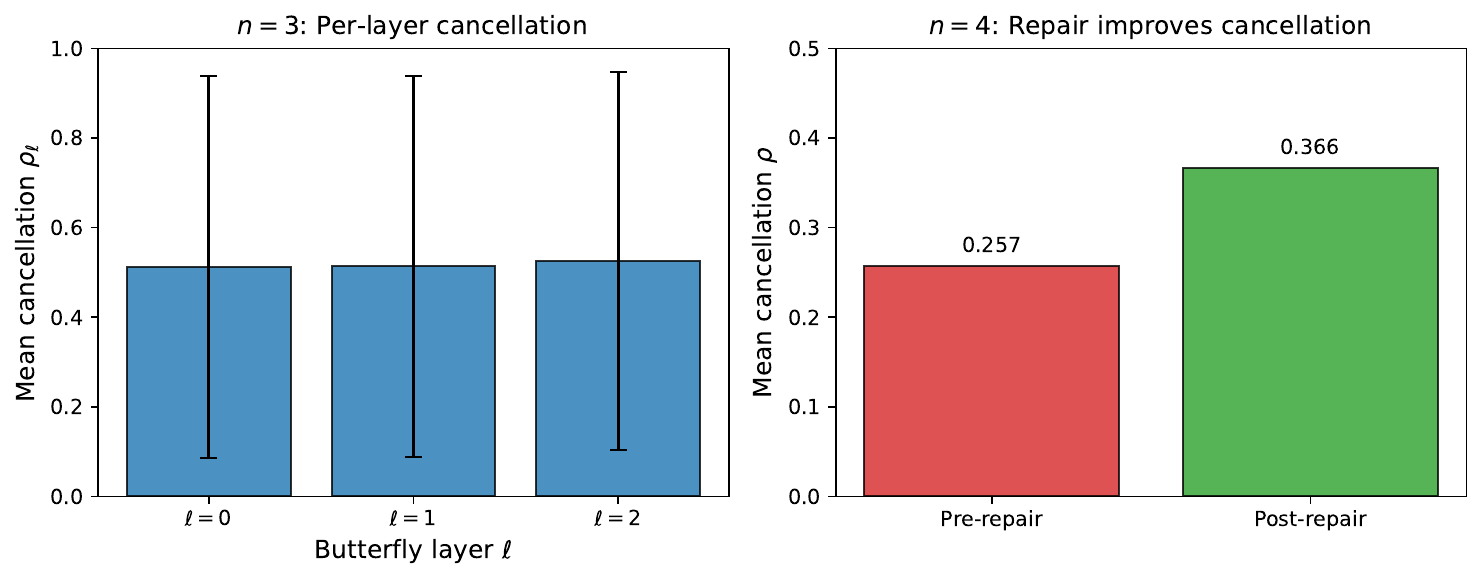}
\caption{Left: Per-layer cancellation index at $n=3$ (proven optimal masks). Right: Mean cancellation before and after repair at $n=4$, showing repair reduces destructive butterfly interference.}
\label{fig:cancellation}
\end{figure}

\subsection{Algebraic Degree Distribution}

At $n=3$, the margin product $\mu(f)$ takes algebraic degree up to 35, with the distribution:

\begin{center}
\small
\begin{tabular}{lccccccc}
\toprule
Degree & 1 & 2 & 3 & 5 & 6 & 7 & 35 \\
\midrule
Count & 72 & 8 & 24 & 16 & 24 & 16 & 96 \\
\bottomrule
\end{tabular}
\end{center}

The 96 functions with degree 35 (37.5\% of all functions) have the most complex influence distributions (Figure~\ref{fig:algebraic}). The high algebraic degree arises from the product structure of $\mu$: even when individual exponents have small denominators, their sum can produce large denominators.

\begin{figure}[t]
\centering
\includegraphics[width=0.7\textwidth]{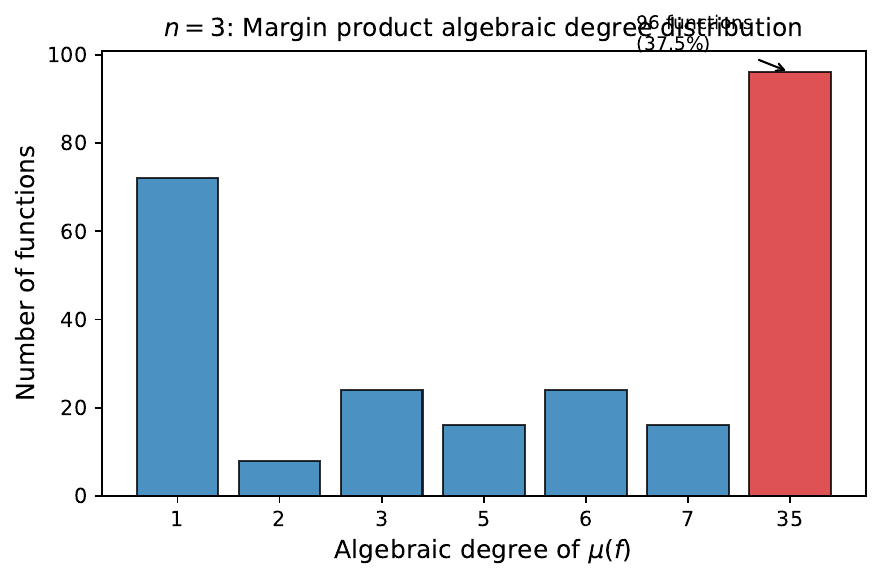}
\caption{Distribution of algebraic degrees of $\mu(f)$ over all 256 Boolean functions at $n=3$. The degree-35 peak (96 functions, 37.5\%) corresponds to functions whose influence vectors produce the richest product structure.}
\label{fig:algebraic}
\end{figure}

\subsection{Does $\mu(f)$ predict minimum support? An MILP-grounded answer}
\label{sec:mu_predicts_support}

The headline question for the contraction invariant is whether it correlates with the synthesis difficulty it was designed to diagnose --- specifically, whether functions with larger $\mu(f)$ require more nonzero ternary coefficients. With the per-function minimum-support certificates from Section~\ref{sec:support_stats} in hand for all $65{,}536$ functions at $n=4$, we can answer this directly via correlation.

\paragraph{Marginal correlations are weak.}
Across all $65{,}536$ functions:
\begin{center}
\small
\begin{tabular}{lcccc}
\toprule
Diagnostic & Pearson $r$ & $p$-value & Spearman $\rho$ & $p$-value \\
\midrule
$I(f)$ (total influence) & $+0.000$ & $1.0$ & $+0.000$ & $1.0$ \\
$\mu(f)$ & $+0.003$ & $0.43$ & $+0.017$ & $1.4{\cdot}10^{-5}$ \\
$\log_2 \mu(f)$ & $+0.004$ & $0.38$ & $+0.017$ & $1.4{\cdot}10^{-5}$ \\
$H(\Inf)$ (influence entropy) & $+0.035$ & $4.1{\cdot}10^{-19}$ & $-0.095$ & $1.9{\cdot}10^{-131}$ \\
$\max_\ell \Inf_\ell(f)$ & $+0.057$ & $4.9{\cdot}10^{-48}$ & $+0.095$ & $2.4{\cdot}10^{-132}$ \\
\bottomrule
\end{tabular}
\end{center}
None of these is large in effect size. The reason is structural: as Section~\ref{sec:support_stats} shows, the minimum support distribution at $n=4$ is concentrated on $\{1,3,5,7,9\}$ with $68.4\%$ of mass at exactly $7$. There is simply not much variance in $|\supp(w_{\min})|$ to be predicted, so any predictor saturates near zero in its marginal correlation.

\paragraph{Conditional on $I(f)$, the Schur-convexity signal becomes strong.}
The strict-Schur-convexity result (Theorem~\ref{thm:schur}) is a statement \emph{at fixed total influence}: among functions with the same $I(f)$, those with more concentrated influence have larger $\mu(f)$. So the right test is conditional. We bin functions by $I(f)$ rounded to $0.05$ and compute Spearman $\rho$ within each bin. The result:

\begin{center}
\small
\begin{tabular}{cccccc}
\toprule
$I(f)$ & $|\text{bin}|$ & $\rho_{\text{Spearman}}(\mu, |\supp|)$ & $p$-value & $\rho_{\text{Spearman}}(H_{\Inf}, |\supp|)$ & $p$-value \\
\midrule
$1.00$ & $424$ & $+0.173$ & $3.3{\cdot}10^{-4}$ & $-0.173$ & $3.3{\cdot}10^{-4}$ \\
$1.50$ & $6688$ & $-0.064$ & $1.7{\cdot}10^{-7}$ & $+0.075$ & $9.7{\cdot}10^{-10}$ \\
$\mathbf{1.75}$ & $\mathbf{13568}$ & $\mathbf{+0.571}$ & $\mathbf{<10^{-300}}$ & $\mathbf{-0.564}$ & $\mathbf{<10^{-300}}$ \\
$2.00$ & $20524$ & $+0.031$ & $1.1{\cdot}10^{-5}$ & $-0.030$ & $1.7{\cdot}10^{-5}$ \\
$2.25$ & $13568$ & $-0.142$ & $2.9{\cdot}10^{-62}$ & $+0.141$ & $1.6{\cdot}10^{-61}$ \\
$2.50$ & $6688$ & $-0.056$ & $5.0{\cdot}10^{-6}$ & $+0.055$ & $6.2{\cdot}10^{-6}$ \\
$3.00$ & $424$ & $+0.170$ & $4.6{\cdot}10^{-4}$ & $-0.173$ & $3.4{\cdot}10^{-4}$ \\
\bottomrule
\end{tabular}
\end{center}

The bin at $I(f) = 1.75$ contains $20\%$ of all functions and exhibits a Spearman correlation $\rho = +0.571$ between $\mu(f)$ and minimum support, with a mirror-image $\rho = -0.564$ between influence entropy and minimum support. This is the cleanest empirical confirmation of the Schur-convexity prediction available in our data: at fixed total influence in the most populous stratum, more concentrated influence (larger $\mu$, smaller $H_{\Inf}$) goes hand in hand with larger minimum support.

The smaller bins exhibit weaker, sometimes opposite-sign signals. Bins with very low or very high $I(f)$ contain functions that are nearly always representable with very small support (low $I$) or that have highly symmetric structure forcing equal-support classes (e.g.\ near-parity functions at $I \approx 2$). The signal concentration at $I = 1.75$ is consistent with that bin spanning the most heterogeneous influence-distribution region: $I$ is large enough that functions vary in their concentration profile but not so large that they collapse onto symmetric high-influence forms.

\paragraph{Interpretation.}
The Schur-convexity prediction transfers from theorem to empirical correlation but only in its conditional form. The marginal correlation $\rho(\mu, |\supp|)$ is weak because the support distribution is itself tightly concentrated; the conditional correlation at fixed $I$ is strong in the most populous and heterogeneous bin. We read this as the empirical content of Theorem~\ref{thm:schur}: $\mu$ is the right invariant to consult \emph{after} fixing total influence, not as a marginal predictor. This sharpens what kind of "diagnostic" $\mu$ is: an at-fixed-total-influence ranker, not a global complexity score.

\paragraph{Probing $n=5$ via uniform sampling.}
We extend the same MILP analysis to $n=5$ by uniformly sampling $10{,}000$ Boolean functions from $\{-1, +1\}^{32}$, the set of all $2^{32}$ truth tables on five variables (full enumeration would require all $2^{32}$ functions). Each MILP instance has $64$ binary variables and $64$ constraints; HiGHS solves them at $\sim 0.4\,\text{s}$ each, totalling $\sim 11$ minutes on $7$ workers. The findings:
\begin{itemize}
    \item \textbf{Distribution.} Minimum support ranges from $5$ to $17$ out of $32$, with mean $10.57$, median $11$, and mode $11$ ($69.8\%$ of the sample). The mass at $|\supp|=9$ is $24.3\%$. As at $n=4$, every minimum-support certificate in our sample has odd support: by the same parity observation, $(H_n w)_i$ is an integer of parity $|\supp(w)| \bmod 2$, so odd support automatically excludes zero outputs (which would violate the strict-margin constraint), whereas even support permits zero outputs. The observation explains the empirical odd-support preference but does not prove that all optimal certificates must have odd support; we report it as an empirical regularity confirmed at both $n=4$ and $n=5$.
    \item \textbf{Marginal correlations weaker than at $n=4$.} The Spearman $\rho(\mu, |\supp|) = -0.023$ ($p \approx 0.02$), consistent with the increased degeneracy of the support distribution at $n=5$. The strongest marginal predictors are $H(\Inf)$ ($\rho = +0.177$) and $\max_\ell \Inf_\ell$ ($\rho = -0.106$).
    \item \textbf{Conditional signal does \emph{not} replicate the $n=4$ pattern.} At $n=4$ the largest $I$-bin gave $\rho(\mu, |\supp|) = +0.571$, the cleanest empirical confirmation of the Schur-convexity prediction. At $n=5$ the conditional $\rho(\mu, |\supp|)$ is mixed-sign across $I$-bins, with the largest-bin estimate ($I = 2.40$, $1{,}696$ functions) giving $\rho = -0.326$ ($p < 10^{-43}$) --- opposite to the $n=4$ direction. The mirror-image $\rho(H_{\Inf}, |\supp|)$ is $+0.128$ in the same bin (the expected direction if the Schur-convexity prediction held), so the two diagnostics partially disagree.
    \item \textbf{What this means (provisional).} Either (a) $n=5$ behaves genuinely differently from $n=4$, in which case the Schur-convexity prediction needs sharpening at higher $n$; or (b) uniform sampling under-represents the NPN-canonical structure that drives the conditional signal. We resolve this directly in the next paragraph.
\end{itemize}

\paragraph{NPN-canonical replication: the $n=5$ negative is real, not a sampling artifact.}
Because NPN equivalence preserves $I(f)$, $\mu(f)$, the multiset of coordinate influences, and the minimum ternary support, one canonical representative per equivalence class suffices to characterise the entire NPN-class universe. We enumerated all $616{,}126$ NPN-canonical Boolean functions at $n=5$ via an orbit-marking algorithm: iterate $\text{fid} \in \{0, 1, \ldots, 2^{32}-1\}$ in lex order, skip already-marked fids, otherwise record the current fid as canonical and mark all $7{,}680$ orbit members ($2 \times 2^5 \times 5! = 7{,}680$ NPN transformations). Implemented with \texttt{numba}, the enumeration runs in $65$ seconds on a single core and produces exactly $616{,}126$ canonical reps, matching OEIS A000370. We then sub-sampled $10{,}000$ canonical reps uniformly and ran the same minimum-support MILP. Comparing side-by-side with the function-uniform $10{,}000$-sample baseline:

\begin{center}
\small
\begin{tabular}{lrr}
\toprule
& Uniform-over-functions & \textbf{Uniform-over-NPN-classes} \\
\midrule
mean min support & $10.57$ & $10.54$ \\
max min support & $17$ & $17$ \\
mode (frequency) & $11$ ($69.8\%$) & $11$ ($67.6\%$) \\
all-odd support & yes & yes \\
marginal $\rho(\mu, |\supp|)$ & $-0.023$ & $-0.021$ \\
marginal $\rho(H_{\Inf}, |\supp|)$ & $+0.177$ & $+0.174$ \\
conditional $\rho(\mu, |\supp|)$ at $I=2.40$ & $-0.326$ ($p<10^{-43}$) & $\mathbf{-0.378}$ ($\mathbf{p<10^{-59}}$) \\
conditional $\rho(\mu, |\supp|)$ at $I=2.80$ & $-0.332$ ($p<10^{-29}$) & $\mathbf{-0.306}$ ($p<10^{-25}$) \\
\bottomrule
\end{tabular}
\end{center}

Every relevant statistic agrees within sampling noise; in the largest-$I$ bin ($I=2.40$, containing $\sim 17\%$ of the canonical-rep population) the NPN-canonical Spearman is actually slightly stronger than the uniform-sample one. \textbf{The $n=5$ result is not an orbit-size artifact}: the conditional $\rho(\mu, |\supp|)$ is consistently negative across $I$-bins under both function-uniform and NPN-canonical sampling, whereas at $n=4$ it was strongly positive in the largest bin. The Schur-convexity prediction (concentrated influence $\Rightarrow$ larger $\mu$ $\Rightarrow$ larger support \emph{at fixed total influence}) genuinely reverses direction between $n=4$ and $n=5$ on minimum-support data.

A partial reconciliation comes from the influence-entropy diagnostic: $\rho(H_{\Inf}, |\supp|)$ is consistently \emph{positive} (in the right direction for the concentration-implies-difficulty intuition) under both samplings at $n=5$, in the range $+0.13$ to $+0.27$ across $I$-bins. So the qualitative "concentrated influence is harder" intuition does extend to $n=5$, but $\mu$ is no longer the right scalar summary of concentration --- influence entropy is. This is consistent with the picture that $\mu$ encodes a Banach-geometric contraction and $H_{\Inf}$ encodes Shannon-style mass concentration; the two summaries agree at $n=4$ where the contraction directly tracks concentration, but separate at $n=5$ where the contraction product picks up additional structure from how individual influences combine through $\Inf_\ell/(1+\Inf_\ell)$.

\paragraph{Open problem.}
The empirical reversal of $\rho(\mu, |\supp|)$ between $n=4$ and $n=5$ is not predicted by Theorem~\ref{thm:schur} in either direction --- the theorem is a strict-Schur-convexity statement at fixed $I(f)$, which says \emph{$\mu$ is monotone in concentration}, not that monotonicity transfers to minimum support at any $n$. The reversal we observe is therefore consistent with the theorem (which is a statement about $\mu$ alone, not about minimum support) but indicates that the empirical link between $\mu$ and minimum support is more subtle than a direct Schur-convexity transfer would suggest. We leave a precise theoretical characterisation as an open problem; a natural starting point is to ask whether $\mu$ becomes increasingly redundant at higher $n$ relative to influence entropy, or whether the reversal is driven by the support distribution at $n=5$ being too tight (mode $11$ at $69.8\%$ density) for the $\mu$-induced ranking to surface signal.

\subsection{Real-Valued Application (companion paper)}
\label{sec:llm_validation}

The BBT framework, while developed for Boolean functions, suggests a practical mechanism for quantizing real-valued weight matrices: per-coordinate spectral activation energy in the Walsh-Hadamard basis identifies which coordinates carry the most signal and therefore should be preserved most carefully under coarse quantization. A companion paper~\cite{bbtQuantCompanion} reports an empirical validation of this idea on five pretrained decoder-only LLMs (SmolLM-135M/360M, Qwen2.5-0.5B/1.5B, TinyLlama-1.1B) at $W2A16$ ($\text{group size}=64$), showing wikitext-2 perplexity reductions of $15$--$58\%$ relative to vanilla \texttt{auto-round} \cite{cheng2023autoround}, with the largest gains on models whose vanilla baselines are most degraded. Three architectural extensions handle non-trivial attention variants: a per-head PCA matrix-$\Gamma$ replacement of $q\_norm$/$k\_norm$ for Qwen3-style models, a SO(2)-pair PCA that commutes with RoPE for non-norm models, and an MoE-aware $\textsc{ScaleTarget}$ adapter for fused-expert layouts; a bit-width ablation at $W2$ vs.\ $W4$ shows the redistribution payoff scales with the per-channel quantization-noise budget, falling within the evaluation noise floor at $W4$. We separate this material into a companion paper because it relies on a real-valued spectral-energy proxy (per-coordinate WHT activation energy) that is \emph{not} the Boolean influence of this paper; the connection is intuitive (per-coordinate Walsh-basis spectral mass) and the empirical perplexity improvements are substantial, but the transfer of the Schur-convexity argument (Theorem~\ref{thm:schur}) from the Boolean to the real-valued setting is qualitative, not formal. We refer the reader to~\cite{bbtQuantCompanion} for the full empirical results, ablations, implementation details, and the LLM-side limitations discussion.

\subsection{What Is Proved vs.\ Conjectured}

\begin{itemize}
    \item \textbf{Proved here:} exact butterfly $\ell_p$ operator norms (Lemma~\ref{lem:butterfly_norm}, via Riesz--Thorin and duality); Jensen and Schur-convexity bounds for $\mu$ (Theorem~\ref{thm:margin_bounds}, Theorem~\ref{thm:schur}); rational/algebraic invariant properties (Propositions~\ref{prop:rational}, \ref{prop:nonpoly}); separation beyond total influence (Theorem~\ref{thm:separation}).

    \item \textbf{Imported from companion work:} certified ternary Walsh-threshold representability for all Boolean functions through $n \le 4$ (Theorem~\ref{thm:universality}, cited from~\cite{spectralSynthesisCompanion}).

    \item \textbf{Computed here:} exact MILP minimum-support certificates for all $65{,}536$ Boolean functions at $n=4$ (Section~\ref{sec:support_stats}); enumeration of all $616{,}126$ NPN-canonical Boolean functions at $n=5$ (matching OEIS A000370); minimum-support MILP statistics on a $10{,}000$-sample of those NPN-canonical reps (Section~\ref{sec:mu_predicts_support}).

    \item \textbf{Empirical findings (not theorems):} marginal correlations of $\mu$, influence entropy, max-influence, and cancellation diagnostics with minimum support are weak; conditional Spearman$(\mu, |\supp|)$ at fixed $I = 1.75$ is $+0.571$ at $n=4$ (the largest stratum), but reverses to $\approx -0.38$ at $n=5$ under both function-uniform and NPN-canonical sampling. Influence entropy continues to track support qualitatively at $n=5$ where $\mu$ does not. These correlations are not theorems and do not establish that $\mu$ universally predicts support.

    \item \textbf{Open / conjectural:} ternary representability for $n \ge 5$ in full; polynomial-time synthesis under structural assumptions; theoretical explanation for the $n=4$ to $n=5$ reversal in $\rho(\mu, |\supp|)$ (e.g.\ whether $\mu$ becomes increasingly redundant relative to influence entropy at higher $n$, or whether the reversal is driven by the support distribution being too tight at $n=5$).
\end{itemize}

\subsection{Algebraic Position of the Invariant}

Because $\log_2 \mu(f)$ is rational rather than polynomial in the Fourier energy data, it sits structurally outside the class of low-degree polynomial spectral summaries (e.g.\ total influence, noise sensitivity, Fourier $L^1$ mass). We leave any relationship to algebraization or natural-proofs barriers \cite{aaronson2009algebrization} as an open question and make no claim that $\mu(f)$ escapes them; we present it only as a strictly finer invariant than total influence and as a practical diagnostic for spectral representation complexity.

\subsection{Limitations}

We make explicit the following limitations to forestall over-reading of the results.
\begin{itemize}
    \item The contraction invariant $\mu(f)$ is a \emph{function-dependent geometric diagnostic} derived from the influence-adaptive Walsh butterfly factorization. It is not a proof of ternary representability; the universality result through $n=4$ is computational and certificate-based and does not imply universality for any $n \ge 5$.
    \item The cancellation index $\rho$ is an empirical heuristic. We do not assert a global $\|w\|_1$ lower bound from layerwise non-cancellation; the pathwise step would require tracking per-coordinate butterfly subtree mass under additional non-cancellation hypotheses we do not formally establish. We therefore use $\rho$ as a diagnostic correlate of support and repair difficulty (Remark~\ref{rem:coord_margin_heuristic}), not as a proof component.
    \item The LLM quantization experiments use a real-valued spectral-energy proxy inspired by Boolean influence (per-coordinate WHT activation energy), \emph{not} the Boolean influence itself. The connection between the two is intuitive (per-coordinate Walsh-basis spectral mass) and the empirical PPL improvements are real, but the transfer of the Schur-convexity prediction from the Boolean to the real-valued setting is qualitative.
    \item All LLM PPL numbers are single-seed and have an empirical $\pm 0.5$ noise floor from calibration sampling; we treat differences smaller than this as no-effect data points.
\end{itemize}

\subsection{Open Problems}

\begin{enumerate}
    \item Does ternary PTF universality extend to $n = 5$ and beyond?
    \item Can the margin product $\mu(f)$ provide provable upper bounds on ternary PTF support size?
    \item Is there a polynomial-time algorithm for ternary PTF synthesis when $I(f) = O(\log n)$?
    \item How does the cancellation index $\rho$ relate to circuit complexity?
\end{enumerate}

\bibliographystyle{plain}

\appendix

\section{Reproducibility of the Computational Universality Theorem}
\label{app:reproducibility}

Theorem~\ref{thm:universality}, imported from the companion synthesis manuscript~\cite{spectralSynthesisCompanion}, is a computational theorem verified by explicit certificate. For completeness, we describe the integer verification procedure used to check any supplied certificate so that reviewers can independently confirm it.

\paragraph{Certificate format.}
For each of the $2^{2^n}$ Boolean functions $f$ on $n$ variables, the certificate provides a ternary mask $w(f) \in \{-1, 0, +1\}^{2^n}$. The certificates for $n=3$ ($256$ masks) and $n=4$ ($65{,}536$ masks) originate in~\cite{spectralSynthesisCompanion}; we redistribute them in the accompanying repository for this paper as supplementary material, alongside the new MILP minimum-support certificates produced in Section~\ref{sec:support_stats} and the $616{,}126$ NPN-canonical $n=5$ representatives produced in Section~\ref{sec:mu_predicts_support}.

\paragraph{Verification procedure.}
For each function $f$ and its certificate $w$:
\begin{enumerate}
    \item Construct $H_n = H_1^{\otimes n}$ with $H_1 = \begin{psmallmatrix} 1 & 1 \\ 1 & -1 \end{psmallmatrix}$ (integer matrix, no floating-point).
    \item Compute $y = H_n w$ (integer matrix-vector product).
    \item Verify $\sign(y_i) = f_i$ for all $i \in [2^n]$.
\end{enumerate}
At $n=4$, this is $2^4 \times 2^4 = 256$ integer multiplications per function, with $65{,}536$ functions total: approximately $16.8 \times 10^6$ operations, completing in under 1 second on any modern CPU.

\paragraph{Reproduction.}
The complete verification can be reproduced by running:
\begin{verbatim}
cd boolean_fourier
python3 -m bbt.exhaustive_validation --n 4 --repair
\end{verbatim}
This regenerates all masks and verifies each one, producing the certificate file \texttt{results/exhaustive\_n4\_repaired.json}. Total runtime: approximately 5 seconds on a single CPU core.

\section{Clarkson Inequalities}
\label{app:clarkson}

Lemma~\ref{lem:butterfly_norm} relies on the Clarkson inequalities \cite{clarkson1936uniformly}:

\begin{itemize}
    \item For $1 \le p \le 2$ and $a, b \in \R$: $|a+b|^p + |a-b|^p \le 2(|a|^p + |b|^p)$.
    \item For $2 \le p \le \infty$ and $a, b \in \R$: $|a+b|^p + |a-b|^p \ge 2(|a|^p + |b|^p)$.
\end{itemize}

Equality in the first case is attained when $ab = 0$ (one argument is zero), giving the $\ell_p$ operator norm maximizer at $(1, 0)$. Equality in the second case is attained at $|a| = |b|$, giving the maximizer at $(2^{-1/p}, 2^{-1/p})$.

\end{document}